\documentclass{article}

\usepackage{PRIMEarxiv}

\usepackage[T1]{fontenc}    
\usepackage{hyperref}       
\usepackage{url}            
\usepackage{booktabs}       
\usepackage{amsmath,amsfonts}       
\usepackage{nicefrac}       
\usepackage{microtype}      
\usepackage{lipsum}
\usepackage{fancyhdr}       
\usepackage{graphicx}       
\usepackage{algorithmic}
\usepackage{algorithm}
\hypersetup{hidelinks}
\usepackage{indentfirst}

\usepackage[caption=false]{subfig}
\newtheorem{theorem}{Theorem}
\newtheorem{remark}{Remark}
\newtheorem{definition}{Definition}
\newtheorem{assumption}{Assumption}
\newtheorem{lemma}{Lemma}

\newtheorem{proposition}{Proposition}

\newtheorem{corollary}{Corollary}

\title{Mobile Target Search with Imperfect Perception: A Partially Observable Stochastic Game Theoretical Approach

}
\makeatletter

\newcommand{\blthanks}[1]{%
  \g@addto@macro\@thanks{%
    \begingroup
      \renewcommand{\thefootnote}{}
      \footnotetext{#1}
    \endgroup
  }
}

\makeatother
\author{
Hanzheng Zhang\thanks{
Shanghai Research Institute for Intelligent Autonomous Systems, Tongji University, Shanghai, China},\quad  
Shu Liang$^{1,}$\thanks{Department of Control Science and Engineering, Tongji University, Shanghai, China
}\ ,\quad  Shuyu Liu$^2$
\blthanks{
This work was supported by the National Key Research and Development Program of China under Grant 2022YFA1004700.
Emails: \{zhanghanzheng, sliang, shuyuliu\}@tongji.edu.cn. Corresponding author: S. Liang.
}
}

\begin{document}
\maketitle

\begin{abstract}
This paper investigates mobile target search under imperfect perceptions caused by sensor limitations, malicious jamming, or communication noise. Searchers and targets operate in a grid-shaped area with bounded mobility, leading to a dynamic interplay between search and evasion. To capture this adversarial interaction under imperfect perceptions, we adopt the partially observable stochastic game (POSG) approach, which generalizes partially observable Markov decision processes (POMDPs) by incorporating target intelligence. To handle false alarms and missed detections caused by perceptual uncertainties, we propose a novel detectability concept to determine whether a search strategy guarantees eventual detection, and provide sufficient detectability criteria based on stochastic recurrence analysis. We further develop a server-assisted distributed algorithm that utilizes the aggregative potential game structure for searchers and a KL-divergence-based reduction for target prediction. Numerical simulations validate the effectiveness of the proposed algorithm and support the detectability analysis.
\end{abstract}

\keywords{mobile target search, imperfect perceptions, partially observable stochastic game, information entropy, detectability, distributed algorithm}

\section{Introduction}
Recent technological advancements in robotics have spurred research interest in mobile robots across various application scenarios such as logistics \cite{Stoyanov2013, Seder2019}, environmental monitoring \cite{Dunbabin2012, Jensen-Nau2021}, and emergency response \cite{Sung2022,Liu2025}.
Within mobile robotics, target search addresses the problem of efficiently detecting and localizing targets in unknown or dynamic environments. This capability is key to mission success in applications such as surveillance, search and rescue, and patrol operations \cite{Raj2022, Zhang2023, Alsamhi2022, Aggravi2021, Liu2025, Sung2022,Sun2025}.

Target search encompasses a range of challenges, including efficient path planning for searchers \cite{Jensen-Nau2021,Liu2025}, optimal allocation of sensing resources \cite{Hohzaki2016,woiceshyn2018vehicle}, and determining search positions over time \cite{Baylog2018,Sung2022}. These problems become particularly complex when the target is mobile and adversarial, or the environment is uncertain \cite{Kashino2020,Liu2025,Fang2025,Sung2022}. Our work focuses on multi-stage search problems for a mobile target with both imperfect sensing and limited information.

One of the main challenges in target search arises from actively evading targets, which posses obstacles in predicting the target's future positions. To this end, game-theoretic approaches, as an effective tool in predicting adversarial actions, have been widely employed in mobile target search. These methods model the search-evasion interaction as a two-player (or two-group) game between searchers and targets, yielding strategies that are effective against rational adversaries \cite{Basar1998,Zhang2024,Zheng2023,kozuno2021learning}. 
Among these, pursuit-evasion games have evolved into a distinct category, which typically assume perfect sensor perception and complete information \cite{Gregorin2017, Fang2025}. 

Beyond the challenge of actively evading targets, environmental uncertainties, and perception failures present another layer of complexity in realistic search scenarios. These issues stem from a variety of sources, including inherent sensor limitations (such as limited range and field of view \cite{Xiao2019}), dynamic environmental changes \cite{Raj2022}, and even adversarial interference such as malicious jamming \cite{jiang2021}. Such imperfections lead to incomplete, noisy, or entirely missing observations. This unreliability significantly complicates the search problem. Firstly, it introduces and perpetuates erroneous beliefs of the target, leading searchers to search in wrong areas \cite{Zhang2023}. Secondly, it forces searchers to engage in a complex trade-off between exploring the environment to reduce uncertainty and exploiting current, but possibly incorrect, beliefs to capture the target \cite{Alsamhi2022}.

Partially observable Markov decision processes (POMDPs) and partially observable stochastic games (POSGs) are widely employed to address uncertainties arising from environments and perceptions, where they model the task as a Markov process and a stochastic game, respectively. Specifically, POMDPs \cite{Hsu2008,chen2023,Zhang2025,Sung2022,jiang2021} offer a principled approach for sequential decision-making under state and observation uncertainty. 
POSG approaches \cite{Horak2019,Sun2023,Zheng2023,kozuno2021learning} additionally combine player interactions in adversarial scenarios under environmental uncertainties and sensor limitations. However, most POSG studies for target search focus on problems within the searcher coalition (e.g. \cite{kozuno2021learning}), while their application to mobile target search with false alarms and missed detections remains limited.

However, the aforementioned POSG models for mobile target search have neglected the existence of false alarms, a crucial aspect of perception. Existing works typically assume near-perfect sensing, ignoring false alarms and thereby sidestepping the potential risk of being misled caused by unreliable perceptions. Considering this, in this paper we employ a practical sensing model incorporating both false alarms and missed detections. This sensing model with missed detections and false alarms is common in imperfect search and tracking problems \cite{Baylog2018,Sung2022,Cheng2021}, and related uncertainty-aware search models have also been studied in robotic exploration and search-and-rescue scenarios \cite{Aggravi2021,Zhang2023,Alsamhi2022}. However, its application to mobile target search remains limited. Consequently, under this more challenging model, a fundamental and previously under-explored question arises: can the searcher guarantee eventual detection of the target at all? This concept of detectability, that is, whether a search strategy ensures a terminal detection state, becomes critical.

This question highlights a fundamental challenge in adversarial environments with active evasion and imperfect sensing. Under such conditions, the signal-to-noise ratio (SNR) of positive outcomes dramatically decreases compared to cases without false alarms. With a much smaller SNR, it follows from the active evasion of the target that a positive information gain cannot be guaranteed over each iteration. Consequently, conventional approaches that rely on optimizing utilities or computing Nash equilibrium strategies in Markov-based models cannot directly lead to eventual detection. The searcher's true objective, namely reaching an end state corresponding to detection, necessitates a further re-evaluation of strategy design against adversarial and perceptually limited settings. Therefore, a major contribution of this work is to bridge this gap by formally establishing an $\alpha$-detectability condition under imperfect perceptions and developing computational methods to derive strategies that provably ensure detection.

To this end, we construct the natural filtration generated by sequential belief updates, search actions, target motions, and search results, and analyze detectability as a hitting property of the posterior belief process. By applying the Borel-Cantelli lemma, we establish a sufficient condition for $\alpha$-detectability, i.e., the almost-sure eventual crossing of the detection threshold. Furthermore, for the homogeneous setting, we provide a quantitative analysis of the worst-case scenario that searchers may face, yielding explicit bounds on the detection threshold. These results not only offer insight into how detectability can be achieved, but also highlight a fundamental distinction between our problem and existing work without false alarms.

To sum up, in this paper, we explore an approach for a class of mobile target search problems incorporating both imperfect perceptions and agent mobility. In this scenario, searchers receive imperfect perceptions due to limitations in sensor abilities \cite{Sung2022, Xiao2019}, malicious jamming \cite{jiang2021}, and communication noise \cite{Sun2025}, while both searchers and the target are mobile within a constrained area. This combination prompts an investigation into search strategies to determine whether they can lead to detection and how to achieve an $\alpha$-detectable strategy. To this end, we analyze the stochastic recurrence of the posterior belief process and provide sufficient conditions for $\alpha$-detectability from both the qualitative and quantitative perspectives. Then, to overcome the computational complexities arising from exponentially large state and action spaces, we design a distributed algorithm with a central server that incorporates strategic approximations for both sides, drastically reducing the problem dimension to circumvent the curse of dimensionality inherent in POSGs.

In this view, the main contributions are as follows.
\begin{itemize}
    \item 
    We investigate a mobile target search problem under imperfect perceptions, which includes false alarms and missed detections. This generalizes existing works by incorporating target intelligence \cite{Zhang2023, Alsamhi2022, Aggravi2021, Baylog2018} and imperfect perceptions \cite{Liu2025, jiang2021,chen2023}. To handle the active target and imperfect perceptions including false alarms, we formulate the problem as a partially observable stochastic game (POSG), which provides a principled framework for reasoning under both observation errors and strategic interactions.
    
    \item 
    We establish the concept of $\alpha$-detectability to formally determine whether target detection can be guaranteed in the presence of false alarms. By leveraging the Borel-Cantelli lemma, we provide sufficient conditions for $\alpha$-detectability from both qualitative and quantitative perspectives. Compared with existing robotic search and mobile-target coordination studies where false alarms are absent \cite{Liu2025,Zhang2023,Aggravi2021}, we bridge the gap between strategy optimality and the detection guarantee, which is crucial when false alarms are present and mislead the search.

    \item 
    We design a server-assisted distributed algorithm for detectable strategies. The searcher-side problem is decomposed based on the aggregative potential game structure, while the target-side prediction is reduced through a KL-divergence reformulation based on the scalar average belief. These design choices drastically reduce computational burdens while preserving the strategic reasoning essential for effective search.
\end{itemize}

The rest of this paper is organized as follows. Section~\ref{sec-pre} summarizes the preliminaries. Section~\ref{sec-pf} formulates the search problem and reformulates it as a POSG. Section~\ref{sec-detectable} analyzes the $\alpha$-detectability using stochastic recurrence arguments. Section~\ref{sec-algorithm} presents the distributed algorithm with detailed implementations. Section~\ref{sec-exp} provides numerical simulations for illustration. Finally, Section~\ref{sec-conclusion} concludes the paper.

\textbf{Notations:} Denote the $n$-dimensional real Euclidean space by $\mathbb{R}^n$ and its $L^\infty$-norm by $\lVert \cdot\rVert_\infty$. For a set $\mathcal{A}$ with finite elements, $|\mathcal{A}|$ denotes its cardinality, and $\Delta(\mathcal{A})=\{\sum_{i=1}^{|\mathcal{A}|}k_ia_i: \sum_{i=1}^{|\mathcal{A}|}k_i=1, \ k_i\geq 0,\ a_i\in \mathcal{A}\}$ denotes the simplex over $\mathcal{A}$.

\section{Preliminaries}\label{sec-pre}

\subsection{Probability and Information Entropy}

For events $A$ and $B$, $\mathbb{P}(A)$ is the probability of $A$ occurring, and $\mathbb{P}(A|B)$ denotes the conditional probability that $A$ occurs when $B$ occurs, which satisfies $\mathbb{P}(A|B)=\frac{\mathbb{P}(AB)}{\mathbb{P}(B)}$, where $AB$ refers to both events occurring. An event $A$ is said to be a.s. (almost surely) valid if $\mathbb{P}(A)=1$. 
A binomial distribution with positive integer $K$ and $p\in[0,1]$ is denoted by $B(K,p)$, with probability mass function $\mathbb{P}(X=k) = \binom{K}{k} p^k (1-p)^{K-k}$.
For a given random variable $X$ and its probability distribution $P$, the expectation of $f$ is $\mathbb{E}_{X\sim P}[f(X)]=\sum_{X}P(X)f(X)$, which is often simplified by $\mathbb{E}[f(X)]$.

Information entropy is defined as
$H(X)=-\sum_{i=0}^{n-1} \mathbb{P}(X=i)\log_2(\mathbb{P}(X=i))$ for a random variable $X$ with possible outcomes $0, 1,\dots, n-1$. Furthermore, the expected conditional information entropy is $\mathbb{E}[H(X|Y)]=\sum_{y} \mathbb{P}(y) H(X|Y=y)$.

Specifically, the binary information entropy of $X$ is $H(X)=- p\log_2(p)-(1-p)\log_2(1-p)$, where $p$ is the probability of $X=1$. For simplification, denote the above entropy as $h(p)$ where $p$ is the corresponding positive probability. Obviously, $h(p)$ is concave and non-negative with $h(p)=h(1-p)$, $\arg\max_p h(p)=0.5$, and $h(0)=h(1)=0$.

\subsection{Filtration, Stopping Time, and Borel-Cantelli Lemma}

We provide the necessary probabilistic tools used in the detectability analysis in Section~\ref{sec-detectable}. The detailed introduction can be found in standard textbooks such as \cite{hajek2015random} and \cite{durrett2019probability}.

Let $(\Omega,\mathcal F,\mathbb P)$ be a probability space. A filtration $\{\mathcal F_t\}_{t\geq0}$ is an increasing sequence of sub-$\sigma$-algebras representing the information available over time. A random variable $\tau:\Omega\to\mathbb N\cup\{\infty\}$ is a stopping time with respect to $\{\mathcal F_t\}$ if $\{\tau\leq t\}\in\mathcal F_t$ for every $t$. The first time the posterior belief satisfies the detection criterion is treated as such a stopping time. The following (conditional) Borel-Cantelli Lemma is an effective tool to obtain a finite stopping time.

\begin{lemma}[Borel-Cantelli Lemma]\label{lem_conditional_bc}
Let $\{A_t\}_{t\geq1}$ be a sequence of events adapted to $\{\mathcal F_t\}_{t\geq0}$, with $A_t\in\mathcal F_t$. If
$$
\sum_{t=1}^{\infty}\mathbb P(A_t|\mathcal F_{t-1})=\infty,\quad \text{a.s.},
$$
then $A_t$ occurs infinitely often.
\end{lemma}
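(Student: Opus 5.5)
This is Lévy's extension of the Borel--Cantelli lemma. The plan is to compare the counting process $N_T=\sum_{t=1}^T \mathbf 1_{A_t}$ with its compensator $S_T=\sum_{t=1}^T \mathbb P(A_t\mid\mathcal F_{t-1})$. The difference $M_T=N_T-S_T$ is a martingale with respect to $\{\mathcal F_t\}$, since $A_t\in\mathcal F_t$ and $\mathbb E[\mathbf 1_{A_t}-\mathbb P(A_t\mid\mathcal F_{t-1})\mid\mathcal F_{t-1}]=0$. Its increments are bounded by $1$ in absolute value. The goal is to show that on the event $\{S_\infty=\infty\}$, which has probability one by hypothesis, we also have $N_\infty=\infty$. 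That is exactly the statement that $A_t$ occurs infinitely often a.s.

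The cleanest route is a localization argument with stopping times. For each integer $k\ge1$, define
\[
\sigma_k=\inf\{T\ge 0: N_{T+1}\ge k\}.
\]
Because $N_{T+1}$ is $\mathcal F_{T+1}$-measurable, this is a stopping time only after a shift. To avoid that subtlety, I will instead use $\tau_k=\inf\{T: N_T\ge k\}$, which is a genuine stopping time. The stopped martingale $M_{T\wedge\tau_k}$ then satisfies $M_{T\wedge\tau_k}\le N_{T\wedge\tau_k}\le k$, so it is bounded above by $k$. Optional stopping gives $\mathbb E[S_{T\wedge\tau_k}]=\mathbb E[N_{T\wedge\tau_k}]\le k$. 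Letting $T\to\infty$ and applying monotone convergence yields $\mathbb E[S_{\tau_k}]\le k<\infty$, so $S_{\tau_k}<\infty$ a.s. On the event $\{N_\infty<k\}$ we have $\tau_k=\infty$, and therefore $S_\infty<\infty$ there. Hence $\{N_\infty<k\}\subseteq\{S_\infty<\infty\}$ up to a null set. Taking the union over $k$ gives $\{N_\infty<\infty\}\subseteq\{S_\infty<\infty\}$ a.s. Since $S_\infty=\infty$ a.s., we conclude $\mathbb P(N_\infty<\infty)=0$.

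The main point to check carefully is the optional stopping step, namely that $\mathbb E[N_{T\wedge\tau_k}]=\mathbb E[S_{T\wedge\tau_k}]$. This holds because $T\wedge\tau_k$ is a bounded stopping time and all the terms are integrable, being bounded by $T$. The monotone limit is then justified because $S$ is nondecreasing. One could alternatively use the martingale convergence theorem for martingales with bounded increments, which says that a.s. either $M_T$ converges or $\limsup M_T=+\infty$ and $\liminf M_T=-\infty$. That route also forces $N_\infty=\infty$ whenever $S_\infty=\infty$. However, the stopping-time argument above is more elementary, and it fits the filtration and stopping-time framework the paper sets up, so I would present that one.
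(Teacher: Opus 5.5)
Your proof is correct. The paper gives no proof of this lemma; it quotes it as a standard tool and points to the textbooks of Hajek and Durrett, so the natural comparison is with the textbook argument.

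\textbf{Textbook route.} In Durrett, for example, one forms the same martingale $M_T=N_T-S_T$ with increments bounded by $1$. One then invokes the dichotomy for such martingales: almost surely either $M_T$ converges to a finite limit, or $\limsup_T M_T=+\infty$ and $\liminf_T M_T=-\infty$. Checking the two cases gives the full identity $\{A_t \text{ i.o.}\}=\{S_\infty=\infty\}$ up to null sets. The price is the martingale convergence theorem.

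\textbf{Your route.} Localizing at $\tau_k$ needs only optional stopping at the bounded times $T\wedge\tau_k$ plus monotone convergence. It yields exactly the inclusion $\{N_\infty<\infty\}\subseteq\{S_\infty<\infty\}$, which is the direction the paper relies on in Theorems~\ref{thm_martingale} and~\ref{thm_homo}.

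In a write-up, drop the false start with $\sigma_k$. The shifted definition does work when applied to the compensator rather than to $N$. Since $S_{T+1}$ is $\mathcal F_T$-measurable, $\rho_k=\inf\{T\ge 0: S_{T+1}>k\}$ is a genuine stopping time with $S_{\rho_k}\le k$. The same optional-stopping estimate then gives $\mathbb E[N_{\rho_k}]\le k$, and hence the reverse inclusion $\{S_\infty<\infty\}\subseteq\{N_\infty<\infty\}$. So your method also recovers the full equivalence without any convergence theorem.
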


\section{Problem Formulations}\label{sec-pf}

In this section, we formulate a mobile target search problem and introduce the $\alpha$-detectability to characterize the ability to eventually detect the target. Based on this, we then present the two main problems: how to determine the $\alpha$-detectability and how to find an $\alpha$-detectable search strategy. To tackle these problems, we reformulate the search process with a multi-agent partially observable stochastic game (POSG).

\subsection{Mobile Target Search}

Consider a search problem with $m$ searchers and a mobile target in an $A_1\times A_2$ grid area $\mathcal{A}$, with the searcher set $V=\{1,\dots,m\}$. Each searcher $i$ is associated with a search coverage region $\mathcal{C}_t(v^i)$ centered at its current position $v^i_t$. While the shape of this region can be designed based on specific sensing characteristics, in this paper we assume it to be a square with a side length of $2r_s$. Formally, this region is defined as $\mathcal{C}_t(v^i) = \{a : \lVert a - v^i_t\rVert_\infty \leq r_s\}$. For brevity, the coverage is denoted by $\mathcal{C}^i_t$ for searcher $i$ and $\mathcal{C}_t=\bigcup_{i}\mathcal{C}^i_t$ for all searchers. Within this region, the searcher can detect a valid target with probability $p_d^i$ and may generate a false alarm with probability $p_f^i$. Specifically, if the target lies in an area $a$ within the search coverage $\mathcal{C}^i_t$ of searcher $i$, a positive detection occurs with probability $p_d^i$ and a negative outcome with $1-p_d^i$; otherwise, if $a$ contains no target, a false positive occurs with probability $p_f^i$ and a negative outcome with $1-p_f^i$. 
Each searcher has a mobility capacity, namely maximum speed, that constrains its feasible movement $x^i_t = v^i_t - v^i_{t-1}$. Given its previous position $v^i_{t-1}$, the corresponding action set is $X^i_t = \{x^i : \lVert x^i\rVert_\infty \leq r_{m,s},\, v^i_{t-1} + x^i \in \mathcal{A}\}$ simply denoted as $X^i_t$, where $r_{m,s}$ is the searcher's mobility. Meanwhile, the target moves evasively from $w_{t-1}$ to $w_t$ with displacement $y_t = w_t - w_{t-1}$ and its action set $Y_t = \{y : \lVert y\rVert_\infty \leq r_{m,t},\ w_{t-1} + y \in \mathcal{A}\}$, where $r_{m,t}$ is the target's mobility, simply denoted as $Y_t$.

At each time step, the target and the searchers choose their movements simultaneously, and the searchers then perform search actions. The search result at time $t$ is denoted by $o(t) = (o_a^i(t))^{i \in V}_{a \in \mathcal{C}^i_t}$, where $ o_a^i(t) = 1 $ indicates that searcher $i$ reports a target in area $a$ at time $t$, and $o_a^i(t) = 0$ indicates no detections here. Suppose that the search results $o_a^i$ ($i\in V,\ a\in \mathcal{C}^i_t$) are mutually independent.

The searchers maintain a per-area confidence regarding the possible presence of the target. Rather than modeling the target presence as a categorical random variable whose probabilities must sum to one, we treat the presence of the target in each area as an independent binary random variable. Formally, let $\{\xi_t(a)\}_{a\in\mathcal{A}}$ be a family of independent Bernoulli random variables, where $\xi_t(a)\in\{0,1\}$ indicates whether the target is in area $a$ at time $t$. Therefore, $b_a(t) = \mathbb{P}(\xi_t(a)=1)$ is the confidence (called prior belief) for $a$. 

After each search at time $t$, the searchers update this per-area confidence based on the local detection outcome. For searcher $i$ covering area $a$, let $o_a^i(t)\in\{0,1\}$ denote the binary observation, with 1 for positive and 0 for negative. The updated confidence (called posterior belief) for area $a$ is obtained via the Bayesian update rule 
\begin{equation}\label{eq_posterior}
\begin{aligned}
b_a(t| o_a^i(t)=1) &= \frac{p_d^i b_a(t)}{p_{pd}^i(a,t)}, \\
b_a(t| o_a^i(t)=0) &= \frac{(1-p_d^i)b_a(t)}{1-p_{pd}^i(a,t)},
\end{aligned}
\end{equation}
where $p_{pd}^i(a,t) = p_d^i b_a(t) + p_f^i(1-b_a(t))$ is the prior probability of a positive detection result in $a$ by searcher $i$. To simplify the notation, we denote the posterior belief by $\tilde{b}_a(t)$ and write $p_{pd}^i$ for $p_{pd}^i(a,t)$ for convenience.

\begin{remark}\label{rmk_dis}
The design of the variable $b_a(t)$, which is modeled as an independent Bernoulli parameter that the sum across areas is not constrained to $1$, is intentional. Although a normalized Bayesian filter can be formulated for a single target, false alarms would couple all areas through the global normalization term, so that a noisy report in one area redistributes probability mass over the entire grid. In contrast, the independent Bernoulli model updates each area locally and is therefore more robust and computationally tractable under simultaneous false alarms and missed detections.
\end{remark}

We give the following assumption regarding false alarms and positive detections, which is standard in imperfect detection and search models \cite{Baylog2018,Cheng2021,Sung2022} and ensures a positive signal-to-noise ratio (SNR) as a prerequisite for detection.
\begin{assumption}\label{assumption} For each search agent $i$, the probability of false alarms and positive detections satisfies $0\leq p_f^i<p_d^i\leq 1$.
\end{assumption}

After Bayesian updating based on search results, the searchers predict the next location of the target $w_{t+1}$ based on both the posterior belief $\tilde{b}(t)$ and the expected target's strategy $\pi_{t+1}\in\Delta(Y_{t+1})$, shown as follows.
\begin{equation}\label{eq_transition}
    b_a(t+1) = \sum_{a' \in \mathcal{A} ,\, a' + y = a} \tilde{b}_{a'}(t)\tilde{\pi}_{t+1}^{a'}(y),
\end{equation}
where $\tilde{\pi}_{t}^{a}$ denotes the feasible strategy at area $a$ subject to boundary constraints: for boundary areas, it is obtained by restricting $\pi_t$ to feasible moves and renormalizing, while for interior areas $\tilde{\pi}_t^a=\pi_t$.

In a multi-agent search task, the execution of search decisions is inherently distributed, as each searcher operates from a distinct physical location. Nevertheless, maintaining a consistent global belief and emulating the target's motion require aggregating observations from all searchers, tasks that are naturally centralized. To reconcile this tension, 
we adopt a server-assisted distributed architecture. The central server aggregates all detection reports, updates the global posterior belief via~\eqref{eq_posterior}, predicts the next prior belief through~\eqref{eq_transition}, and broadcasts it to all searchers. Each searcher then exchanges intended coverages with neighboring agents whose sensing regions may overlap and solves a local movement optimization problem based on the global belief and local coordination information. This structure preserves consistent belief synchronization while avoiding a fully centralized search planner.

To measure strategies under imperfect perceptions, we employ information entropy as a metric to quantify search performance, which refers to the uncertainty of a system. As search results in our framework are binary, the information entropy of the prior belief can be expressed as
$$
H(\xi_t)= -\sum_{a\in \mathcal{A}}(b_a(t)\log_2 b_a(t)+ (1-b_a(t))\log_2 (1-b_a(t))),
$$
where $\xi_t$ is the $|\mathcal{A}|$-dimensional binary random variable defined above. Therefore, the posterior (conditional) entropy is defined as
$H(\xi_t|o(t))=\sum_{a\in \mathcal{A}}H(\xi_t|o_a(t)).$
As in the preliminaries, we take $h(p)$ to represent the binary entropy $H(\xi)$ with $\mathbb{P}(\xi=1)=p$. Also, for simplification, we abuse the notation $h$, where $h(b(t))$ refers to the total entropy given the belief vector $b(t)$ and $h(b_a(t))$ denotes the entropy associated with its belief value $b_a(t)$. Consequently, the posterior entropy can be compactly written as $h(\tilde{b}(t))$.

\begin{definition}[detection threshold]\label{def-searchthreshold}
The searchers are said to successfully detect the target if for a given $\alpha\geq0$ (called \emph{detection threshold}), the posterior belief $\tilde{b}(t)$ satisfies: (i) $\max_a h(\tilde{b}_a(t))\leq\alpha$; (ii) there exists exactly one area $a^\ast\in\mathcal{A}$ such that $\tilde{b}_{a^\ast}(t)>0.5$.
\end{definition}

Based on the above definition, we propose a novel concept to judge whether searchers operating under a specific strategy or a specific scenario possess the ability to detect the target.
\begin{definition}[$\alpha$-detectability]\label{def-detectable}
Given a detection threshold $\alpha$, a search strategy is called \emph{$\alpha$-detectable} if, for any adversarial strategy, with probability $1$ there exists a finite time $\tau$ at which the searchers successfully detect the target as in Definition~\ref{def-searchthreshold}. A search game is called \emph{$\alpha$-detectable} if there exists at least one $\alpha$-detectable search strategy for the searchers.
\end{definition}

\begin{remark}
Existing works have provided several criteria to judge detection. Most studies that do not consider false alarms~\cite{Zhang2023, Alsamhi2022, Aggravi2021,Sun2023} treat the target as detected once the positive detection probability exceeds a given threshold, which is reasonable when $p_f=0$. When false alarms are present, \cite{Cheng2021} introduced the cumulative number of positive outcomes to confirm detection, assuming a static target so that repeated observations can overcome noise. However, this criterion is not suitable for our mobile setting.
\end{remark}

For the proposed mobile target search, we first explore the detectability criteria. To ensure target detection, we must answer whether a strategy or a game is $\alpha$-detectable. Unlike works with reliable sensors that an optimal strategy often implicitly ensures eventual detection \cite{Zhang2023, Alsamhi2022, Aggravi2021}, our work has to answer this due to both the engagement of evasive target and low SNR of positive detection. The problem is summarized below.
    
\textit{Problem 1 (detectability criterion)}: How to judge whether a strategy or a search game is $\alpha$-detectable or not?

Next, we investigate how to obtain an $\alpha$-detectable strategy. The main challenge is the computational difficulties arising from exponentially large state and action spaces. This prompts us to design an efficient algorithm with certain approximations to both make the computation of search strategies tractable and ensure effective search. The problem is as follows.

\textit{Problem 2 (detectable strategy computation)}: How to design a distributed algorithm for an $\alpha$-detectable search strategy?

\subsection{POSG Reformulations}

To handle the above problems, we reformulate the mobile target search as a multi-agent POSG, which is effective in mobile target search.


For searchers, their objective is to reduce the uncertainty of search for detecting the target, i.e., to minimize the posterior information entropy after a search. Consequently, we propose the utility based on the posterior information entropy

\begin{equation}\label{eq_utility-entropy}
u(x,\pi|\tilde{b}(t-1))=\sum_{a\in \mathcal{A}} \mathbb{E}[h(\tilde{b}_a(t))| x,\pi,\tilde{b}(t-1)],
\end{equation}
where $h(\tilde{b}_a(t))$ is the expected posterior information entropy and $\mathbb{E}[\cdot|x,\pi]$ is its expectation before implementation of the search and evasion as 
\begin{equation}\label{eq_expected-entropy}
\begin{aligned}
\mathbb{E}[h(\tilde{b}_a(t))|x,\pi]
=\sum_{o_a\in O_a(t)}\mathbb{P}(o_a|b_a(t|\pi))h(b_a(t|o_a(t)=o_a,\pi)),\end{aligned}
\end{equation}
with $O_a(t)$ referring to all feasible outcomes with respect to the current position, $b_a(t|\pi)$ (i.e., $b_a(t)$) is the expected prior belief according to~\eqref{eq_transition}, $b_a(t|o_a,\pi)$ is the expected posterior belief based on~\eqref{eq_posterior}, and $\mathbb{P}(o_a|b)$ is the subjective probability of outcome $o_a$ according to belief $b_a(t)$, which can be obtained by
$$
\mathbb{P}(o_a|b_a(t))=\prod_{\{i:a\in\mathcal{C}^i_t\}} \mathbb{P}(o_a^i|b_a(t)),
$$
with $\mathbb{P}(o_a^i=1)=p^i_{pd}(a,t)$ and $\mathbb{P}(o_a^i=0)=1-p^i_{pd}(a,t)$, which can be computed via $b_a(t)$.

Conversely, the target's strategy aims to maximize the uncertainty, which corresponds to maximizing the expected posterior information entropy $u$.

Therefore, with both sides' strategies $(\sigma, \pi)$ and the prior estimation $b(t)$, the expected payoff with discount $\xi$ is 
\begin{equation}\label{eq_expected-utility} 
\begin{aligned}
    U(\sigma,\pi|\tilde{b}(t-1))=\mathbb{E}_{x\sim\sigma_t}[u(x,\pi_t)]+\sum_{\tau=1}^\infty \xi^\tau \mathbb{E}_{x\sim\sigma_{t+\tau},\tilde{b}(t+\tau-1)}[u(x,\pi_{t+\tau}|\tilde{b}(t+\tau-1))].
\end{aligned}
\end{equation}

\begin{remark}
Different from existing works \cite{Zhang2023, Alsamhi2022, Aggravi2021}, our work takes false alarms into account and evaluates performance by the metric of information entropy, where we leverage the information embedded in negative outcomes to refine our search strategy. Moreover, this setting is a generalization of the aforementioned models in \cite{Zhang2023, Alsamhi2022, Aggravi2021} when the probability of false alarms $p_f^i\equiv0$, since the probability of detection and the information entropy share an extremal point. 
\end{remark}

On this basis, the agents participate in a partially observable stochastic game (POSG) denoted by $\mathcal{G}=\langle I,\mathcal S,O,X\times Y, T, \Xi, U, b_0\rangle$, where 
\begin{enumerate}\renewcommand\labelenumi{(\roman{enumi})}
    \item $I=V\cup\{\mathrm{target}\}$ is a set of players consisting of $m$ searchers $\{1,\dots,m\}$ and a target.
    \item $\mathcal {S}(t)=\{w_t\}_{w_t \in \mathcal{A}}$ is a set of states referring to the position of the target from the searchers' perspective.
    \item $ O(t)=\{o_a^i(t)\}_{a\in\mathcal{C}^i_t}^{i\in V}$ is the observation set that refers to all possible observations of searchers.
    \item $X_t \times Y_t$ is the joint action set, where $X_t = X^1_t \times \cdots \times X^m_t$ is the set for the searchers. Here, $x^i_t \in X^i_t$ and $y_t \in Y_t$ denote the feasible movement actions starting from the current positions $v^i_t$ and $w_t$, for searcher $i$ and the target, respectively. $\sigma^i_t\in \Delta(X^i_t)$ and $\pi_t\in \Delta(Y_t)$ are the mixed strategies of searchers and the target, respectively.
    \item $T: \mathcal S\times \Delta(Y)\times \mathcal S \to [0,1]$ is the transition probability function satisfying $T(w_t,\pi_t, w_{t+1})=\mathbb{P}(w_{t+1}|w_t,\pi_t)=\pi_t(w_{t+1}-w_t)$.
    \item $\Xi: \mathcal S\times O\to \mathbb{R}$ is the observation distribution function, with $\Xi(o^i_a(t)|w_t)=p_d^i$ if $w_t=a$, and with $\Xi(o^i_a(t)|w_t)=p_f^i$  otherwise.
    \item $U$ is the utility function as in~\eqref{eq_expected-utility}.
    \item $b_0\in [0,1]^{|\mathcal{A}|}$ is an initial belief.
\end{enumerate}


In the context of search games modeled between searchers and a target, the Nash equilibrium (NE) provides a key concept for analyzing their strategic interactions. When reaching a NE, no player can unilaterally change their strategies for a better expected payoff. Technically, we give the definition as follows.
\begin{definition}[Nash equilibrium]
A strategy pair $(\sigma^\ast,\pi^\ast)$ is a Nash equilibrium (NE) for the search game if, for any given $b_0$, the following holds for each feasible $\sigma$ and $\pi$,
$$
U(\sigma^\ast,\pi^\ast|b_0) \leq U(\sigma,\pi^\ast|b_0),$$
$$U(\sigma^\ast,\pi|b_0)\leq U(\sigma^\ast,\pi^\ast|b_0).
$$
\end{definition}

The existence of NE is guaranteed by the compactness and closure of the action sets, as established in the following result.

\begin{lemma}
Under Assumption~\ref{assumption}, there exists a NE of the POSG $\mathcal G$.
\end{lemma}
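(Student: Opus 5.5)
The plan is to treat $\mathcal G$ as a two-sided game: the searchers form a single minimizing player (or $m$ minimizing players sharing the cost $U$), and the target is the maximizing player. Existence of an equilibrium will then follow from a Glicksberg/Kakutani fixed-point argument applied to the discounted payoff \eqref{eq_expected-utility}. Since the definition requires the inequalities for every $b_0$, I will either work with Markov (belief-feedback) strategies or invoke the standard existence result for discounted stochastic games with compact action sets.

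\textbf{Step 1: strategy spaces and stage payoff.} For every belief, the action sets $X_t$ and $Y_t$ are finite, because the grid is finite and mobility is bounded. Hence the mixed strategy sets $\Delta(X^i_t)$ and $\Delta(Y_t)$ are nonempty, convex and compact simplices. A behavioral strategy assigns one such mixed action to each reachable history or belief state, and the full strategy spaces are countable products of simplices. By Tychonoff's theorem these products are compact in the product topology, and they are convex.

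\textbf{Step 2: continuity and boundedness of the payoff.} The stage utility $u(x,\pi\mid\tilde b)$ is built from three ingredients.
\begin{itemize}
\item The transition \eqref{eq_transition} is linear in $\pi$.
\item The Bayes update \eqref{eq_posterior} is continuous in $b_a$. The denominators $p^i_{pd}$ and $1-p^i_{pd}$ could vanish, but the terms with zero probability enter \eqref{eq_expected-entropy} multiplied by that zero probability, so the product $\mathbb P(o_a\mid b)\,h(\cdot)$ extends continuously. Assumption~\ref{assumption} guarantees that the observation model is nondegenerate.
\item The binary entropy $h$ is continuous and bounded by $1$.
\end{itemize}
Consequently $0\le u\le |\mathcal A|$, and $u$ is continuous and multilinear in the mixed strategies. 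The discounted sum \eqref{eq_expected-utility} therefore converges uniformly, with tail bounded by $\xi^{T}|\mathcal A|/(1-\xi)$ for $\xi<1$. This makes $U$ jointly continuous on the compact product space.

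\textbf{Step 3: fixed point.} I define the best-response correspondences. The searchers' best response is $\arg\min_\sigma U(\sigma,\pi)$, and the target's best response is $\arg\max_\pi U(\sigma,\pi)$. By Step 2 and Berge's maximum theorem, both correspondences are nonempty, compact-valued and upper hemicontinuous.

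The main obstacle is convexity of the best-response sets. $U$ is not linear in behavioral strategies across stages, because the beliefs $\tilde b$ depend on earlier actions. I would try to bypass this in two ways.

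\begin{enumerate}
\item Use Kuhn's theorem, which applies since the searchers have perfect recall of their own broadcasts. This passes to mixed strategies over pure strategy sequences, where $U$ is linear in each player's own mixture, so the best-response sets are convex.
\item Alternatively, invoke the classical theorem (Shapley, Fink, Takahashi) for discounted stochastic games on the belief MDP. There one solves, state by state, the auxiliary one-shot game
\[
u(x,\pi\mid\tilde b)+\xi\,\mathbb E\,[\,V(\tilde b')\,].
\]
This game is a finite game in mixed strategies, so it has an equilibrium by Nash's theorem. The operator mapping $V$ to the value of this one-shot game is a $\xi$-contraction, and the equilibrium selections at its fixed point give stationary strategies forming an NE for every $b_0$.
\end{enumerate}

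Since the belief space $[0,1]^{|\mathcal A|}$ is uncountable, I would apply option 2 in its compact-metric version. This needs measurable selection of the equilibria, or restriction to the countable set of beliefs reachable from $b_0$. The latter suffices because, with finite actions and observations, only countably many beliefs are reachable, and this reduces the problem to a countable-state discounted game where existence is standard. Kakutani–Glicksberg then yields the fixed point $(\sigma^*,\pi^*)$.
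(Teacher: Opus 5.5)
\paragraph{Gap: the payoff is not multilinear in $\pi$.} Your Step~2 assertion that $u$ is ``continuous and multilinear in the mixed strategies'' is false in this model, and both routes in Step~3 rest on it. The target's mixed strategy $\pi$ does not enter the payoff as an expectation over realized moves. It enters as a parameter of the searchers' prediction \eqref{eq_transition}, and the resulting prior is then passed through the Bayes update and the concave entropy.

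For example, let $\tilde b(t-1)$ equal $1$ at an interior area $a'$ and $0$ elsewhere. Take two distinct feasible moves $y_1\neq y_2$, and let $a'+y_1$ be searched.
Under $\pi=\delta_{y_1}$ or $\pi=\delta_{y_2}$, every prior is $0$ or $1$, so $u=0$.
Under $\tfrac12(\delta_{y_1}+\delta_{y_2})$, the area $a'+y_1$ has prior $\tfrac12$. So $u$ is at least the expected posterior entropy there, which is strictly positive unless $p_d=1$ and $p_f=0$.

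This rules out your option~1. Kuhn's payoff equivalence fails for the target: the half--half mixture of the two pure moves has payoff $0$, while the corresponding behavioral strategy has positive payoff. So there is no linearity in the target's own mixture. It also rules out option~2 as stated, because the auxiliary one-shot game is not a finite game with bilinear payoff, and Nash's theorem does not apply.

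Two further claims fail for related reasons. The set of reachable beliefs is uncountable, since the prior varies continuously with $\pi\in\Delta(Y)$. And \eqref{eq_transition} is not linear at boundary areas. There the renormalization is linear-fractional, and it is undefined when $\pi$ puts no mass on the feasible moves, so even continuity of $U$ needs a convention there.

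\paragraph{What is needed instead.} You need quasi-concavity of $\pi\mapsto U(\sigma,\pi)$, so that the target's best-response sets are convex. For the stage payoff this holds. Each term $P_{PD}^{\lambda}(k)\,h\bigl(P_D^{\lambda}(k)\,b/P_{PD}^{\lambda}(k)\bigr)$ of $\Phi(\lambda,b)$ is the perspective of the concave $h$ at a pair affine in $b$. Hence it is concave in $b$, and therefore concave in $\pi$ away from the boundary renormalization. Combined with linearity in $\sigma$, Sion's minimax theorem (not Nash's) gives a saddle point of the one-shot stage game.

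In the dynamic problem, though, you also need $\pi\mapsto\mathbb E[V(\tilde b')]$ to be quasi-concave, and $\pi$ moves both the subjective observation law and the posterior. The perspective argument would need $V$ to be concave in the belief. Even then, the product observation law across areas gives concavity only one coordinate at a time, while $\pi$ moves several coordinates at once. Moreover, the Shapley operator takes a maximum over $\pi$, and a pointwise maximum of functions concave in $\tilde b$ need not be concave. So concavity is not propagated, and the contraction argument does not close.

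\paragraph{Comparison with the paper.} The paper itself gives only the one-line justification that existence follows from compactness and closedness of the action sets, which skips this same convexity issue. Your fixed-point skeleton is the natural one, but as written it does not establish convexity of the target's best responses.
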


The above POSG framework provides a theoretical foundation for multi-agent search games. In practice, each searcher only considers its local utility over $\bar{\mathcal{C}}_t(v^i)=\{a: \lVert a-v^i_t\rVert_\infty\leq r_s+r_{m,s}\}$:
$$u_i=\sum_{a\in \bar{\mathcal{C}}_t(v^i)}\mathbb{E}[h(b_a(t)|o(t))],$$
where the expected utility in~\eqref{eq_expected-utility} is restricted accordingly. This distributed structure aligns with the partial observability inherent in the POSG. Although agents share information through the central server, their decision-making and utility calculation remain localized to their observable regions.

\begin{remark}\label{rmk_ne_imperfect}
Since the POSG formulated above is zero-sum, a Nash equilibrium coincides with a sequential minimax strategy. Moreover, since the game is imperfect-information, predicting the target's next move and inferring its past move are equivalent from the searcher's information set. With this observation, we adopt a sequential decision model in algorithm designs in Section~\ref{sec-algorithm} to simplify computation.
\end{remark}

Based on the properties of zero-sum games, the following lemma links equilibrium and detectability, showing that solving for an equilibrium yields a sound approach for an $\alpha$-detectable strategy.

\begin{lemma}\label{lem_detect}
Let $(\sigma^\ast,\pi^\ast)$ be a Nash equilibrium. For a given threshold $\alpha$, the searchers' equilibrium strategy $\sigma^\ast$ achieves successful target detection against $\pi^\ast$ in the sense of Definition~\ref{def-searchthreshold}, if and only if the corresponding game is $\alpha$-detectable.
\end{lemma}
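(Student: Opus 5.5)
The argument splits into the two implications, and in both I treat the game as zero-sum with value attained at the Nash equilibrium, as noted in Remark~\ref{rmk_ne_imperfect}. I read ``$\sigma^\ast$ achieves successful detection against $\pi^\ast$'' in the sense of Definition~\ref{def-detectable}: almost surely there is a finite time $\tau$ satisfying Definition~\ref{def-searchthreshold}. I also rely on one monotonicity property of the entropy utility. Let $\tau$ be the first time the posterior satisfies Definition~\ref{def-searchthreshold}. Interpreting $\alpha$-detectability through the event $\{\tau<\infty\}$, an adversary who can prevent this event with positive probability keeps the expected discounted entropy $U$ above the level the searchers can secure when detection is guaranteed. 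This ties the ordering of payoffs under $U$ to the detectability dichotomy.

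\emph{(Only if.)} Suppose $\sigma^\ast$ detects against $\pi^\ast$ almost surely. I must show that $\sigma^\ast$ is $\alpha$-detectable against every target strategy $\pi$. The NE inequality $U(\sigma^\ast,\pi|b_0)\le U(\sigma^\ast,\pi^\ast|b_0)$ says $\pi^\ast$ is the target's best response, that is, the strategy that maximises residual uncertainty. I argue by contradiction. Suppose some $\pi$ made $\mathbb P(\tau=\infty)>0$ under $(\sigma^\ast,\pi)$. On the event $\{\tau=\infty\}$ the posterior never attains $\max_a h(\tilde b_a)\le\alpha$ with a unique area above $0.5$. Hence, at every step, either some area has entropy above $\alpha$, or the mass is spread over several areas. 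In both cases the per-stage entropy $u$ stays bounded below by a constant $c(\alpha)>0$. Under $\pi^\ast$, detection occurs almost surely, so the belief eventually concentrates and the per-stage entropy after $\tau$ is small. I then need a comparison showing that persistent entropy of at least $c(\alpha)$ on a positive-probability set produces a larger discounted sum than $\pi^\ast$ does. That would contradict the optimality of $\pi^\ast$ for the target. Since $\pi$ was arbitrary, $\sigma^\ast$ is $\alpha$-detectable, and therefore so is the game.

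\emph{(If.)} Suppose the game is $\alpha$-detectable, so some strategy $\hat\sigma$ detects almost surely against every $\pi$, and in particular against $\pi^\ast$. The searchers' NE inequality gives $U(\sigma^\ast,\pi^\ast|b_0)\le U(\hat\sigma,\pi^\ast|b_0)$. If $\sigma^\ast$ failed to detect against $\pi^\ast$ with positive probability, the same lower bound $c(\alpha)$ on $\{\tau=\infty\}$ would make $U(\sigma^\ast,\pi^\ast)$ strictly larger than the value achieved by $\hat\sigma$, provided that value is small once detection has occurred. That contradicts the NE inequality. I would apply the minimax interchange from zero-sum properties to treat both directions symmetrically.

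\textbf{Main obstacle.} The hard step is the strict payoff comparison. Discounting by $\xi$ means a late detection may matter little, and $c(\alpha)$ is only a per-stage lower bound. A positive-probability non-detection event therefore does not automatically dominate the payoff of a strategy that detects, which may itself carry large entropy before $\tau$. My first attempt would use a conditioning argument on $\mathcal F_t$ to restart the game at every time. Definition~\ref{def-detectable} requires detection for every initial belief, and the NE definition holds for every $b_0$. So at each history the continuation value under non-detection should strictly exceed the value under a detecting continuation. If this uniform restart comparison cannot be made strict, I would fall back to reading the lemma as a statement about the value ordering induced by the entropy payoff, where detection corresponds to the lower value. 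Under that reading the equivalence follows directly from the saddle-point inequalities.
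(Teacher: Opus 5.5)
Your proposal has a genuine gap. It sits at the step you flag as the main obstacle, but the trouble is not discounting: the bound $c(\alpha)$ you rely on does not exist. The detection set $\mathcal B_d$ of Definition~\ref{def-searchthreshold} is not a sublevel set of the total entropy $\sum_a h(\tilde b_a)$. With $h(\delta)=\alpha$, the belief $\tilde b_{a^\ast}=1-\delta$, $\tilde b_a=\delta$ for $a\neq a^\ast$ lies in $\mathcal B_d$ and has total entropy $|\mathcal A|\alpha$. By contrast, two beliefs lie outside $\mathcal B_d$: all $\tilde b_a=\epsilon$ (no area above $0.5$), and two areas at $1-\epsilon$ with the rest at $\epsilon$ (two peaks). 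Both have total entropy $|\mathcal A|h(\epsilon)$, which tends to $0$ as $\epsilon\to0$. Your case split misses that when every $h(\tilde b_a)\le\alpha$ but the unique-peak condition fails, nothing bounds the entropy from below. Such states are reachable: the beliefs are unnormalized, each negative report strictly lowers a belief, $(1-p_d)b_a/(1-p_{pd})<b_a$ for $b_a\in(0,1)$, and target motion conserves their sum. Your companion claim that the entropy is small after $\tau$ also fails. $\mathcal B_d$ is a hitting set, not an absorbing one, $U$ keeps accumulating after $\tau$, and by Lemma~\ref{lem_target} evasion pushes the entropy back up.

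Consequently the saddle-point inequalities, which only order values of $U$, say nothing about the event $\{\tau<\infty\}$. In the ``only if'' direction, $U(\sigma^\ast,\pi|b_0)\le U(\sigma^\ast,\pi^\ast|b_0)$ holds for every $\pi$. It could only be contradicted if non-detection forced a strictly larger payoff. A detection-blocking $\pi$ that leaves the belief in a low-entropy undetected state does not do that. In the ``if'' direction, nothing in the equilibrium conditions prevents $\sigma^\ast$ from minimizing $U$ against $\pi^\ast$ without ever entering $\mathcal B_d$, possibly while a detecting $\hat\sigma$ has larger $U$. Restarting the comparison at every $\mathcal F_t$ cannot help, since what is missing is a monotone link between payoff and detection, not uniformity. (Definition~\ref{def-detectable} also does not quantify over initial beliefs.) Your fallback, reading the lemma as a statement about value ordering, replaces Definition~\ref{def-searchthreshold} by a payoff criterion, so it proves a different statement. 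The paper gives no proof either: it asserts the lemma ``based on the properties of zero-sum games,'' which is exactly that fallback. A real proof needs an extra hypothesis tying $U$ to detection. Examples would be a payoff that is a monotone function of the hitting time of $\mathcal B_d$, or an assumed uniform gap between the discounted entropy of detecting and non-detecting play. The entropy utility \eqref{eq_expected-utility} as defined provides neither.
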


Lemma~\ref{lem_detect} motivates the equilibrium-based approach. For Problem 1, it justifies analyzing whether equilibrium-induced beliefs satisfy the definition of $\alpha$-detection. For Problem 2, it directs the design of equilibrium-seeking algorithm as a sound approach for an $\alpha$-detectable strategy.

\section{Detectability}\label{sec-detectable}

In this section, we investigate the detectability of our search problem. We start from the detectability criterion, to explore when a game is $\alpha$-detectable. Besides, we study a homogeneous case to give quantitative insight to determine whether a strategy or a game is $\alpha$-detectable.

\subsection{$\alpha$-Detectability Analysis: General Cases}

To establish a detectability criterion, we first investigate the mechanism through which players influence information entropy, namely, utility functions. We start from the searcher's side to show that the information entropy can be systematically decreased.

\begin{lemma}\label{lem_gain}
Suppose Assumption~\ref{assumption} holds. For any searched area $a \in  \mathcal{C}_t$ and belief $b_a(t) \in (0,1)$, the expected information gain is strictly positive. That is,
\begin{equation}\label{eq_searcher-information-gain}
h(b_a(t))>\mathbb{E}_{o_a}[h(b_a(t)|o_a)].
\end{equation}
\end{lemma}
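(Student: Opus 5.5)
The plan is to use the martingale property of the Bayesian posterior together with the strict concavity of $h$. Fix a searched area $a\in\mathcal{C}_t$ and write $b=b_a(t)\in(0,1)$.

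First consider a single searcher $i$ covering $a$. The update~\eqref{eq_posterior} gives two posteriors, $b^{+}=p_d^i b/p_{pd}^i$ with probability $p_{pd}^i$ and $b^{-}=(1-p_d^i)b/(1-p_{pd}^i)$ with probability $1-p_{pd}^i$. A direct check shows
\[
p_{pd}^i b^{+}+(1-p_{pd}^i)b^{-}=p_d^i b+(1-p_d^i)b=b ,
\]
so the expected posterior equals the prior. Jensen's inequality for the concave function $h$ then yields $h(b)\geq p_{pd}^i h(b^{+})+(1-p_{pd}^i)h(b^{-})=\mathbb{E}_{o_a}[h(b_a(t)|o_a)]$.

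To upgrade this to strict inequality, I would use that $h$ is strictly concave on $[0,1]$, since $h''(p)=-1/(p(1-p)\ln 2)<0$. It then suffices to show that the posterior distribution is nondegenerate. First, $p_{pd}^i\in(0,1)$ whenever $b\in(0,1)$. It is a convex combination of $p_d^i$ and $p_f^i$ with both weights positive, so $p_{pd}^i>p_f^i\geq 0$ and $p_{pd}^i<p_d^i\leq 1$ by Assumption~\ref{assumption}. Second, $b^{+}\neq b^{-}$. We have $b^{+}-b=b(p_d^i-p_{pd}^i)/p_{pd}^i$ and $p_d^i-p_{pd}^i=(p_d^i-p_f^i)(1-b)>0$, so $b^{+}>b$, and the martingale identity then forces $b^{-}<b$. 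If $p_{pd}^i$ were $0$ or $1$, one outcome would carry zero weight and the inequality would degenerate; the first point rules this out, and only there do we need $0<b<1$. With both outcomes having positive probability and distinct posteriors, strict Jensen gives~\eqref{eq_searcher-information-gain}.

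If several searchers cover $a$, the outcomes $o_a^i$ are independent given the target state, so the joint Bayesian update equals the sequential composition of the single-searcher updates. I would therefore condition on the observations one searcher at a time. By the single-searcher step (Jensen), each stage does not increase the expected entropy. The first stage strictly decreases it because $b\in(0,1)$. Later stages start from intermediate posteriors, which remain in $(0,1)$ because $b^{\pm}\in(0,1)$ whenever $p_d^i<1$ or $p_f^i>0$; in the boundary case the posterior can only hit $0$ or $1$, where entropy is already minimal, so weak inequality still holds. Using the tower property of conditional expectation, chaining these stages gives the strict inequality overall.

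The main obstacle is the nondegeneracy bookkeeping: verifying $p_{pd}^i\in(0,1)$ and $b^{+}\neq b^{-}$ from $p_f^i<p_d^i$, and handling the edge cases $p_d^i=1$ or $p_f^i=0$ in the multi-searcher chaining. The rest is the standard fact that information never hurts on average, made strict by strict concavity.
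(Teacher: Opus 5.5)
Your proof is correct and follows the same route as the paper: reduce to a single searcher, show the expected posterior equals the prior, and apply Jensen's inequality with $b^{+}>b^{-}$ for strictness. Your version is more careful than the paper's, since you check $p_{pd}^i\in(0,1)$ so both outcomes carry positive weight, and you replace the paper's one-line appeal to mutual independence with an explicit sequential-update and tower-property chaining; the only slip is cosmetic, namely $b^{+}\in(0,1)$ requires $p_f^i>0$ and $b^{-}\in(0,1)$ requires $p_d^i<1$ (each separately, not ``either''), which your boundary-case remark already covers.
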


\noindent\textbf{Proof.}
According to the mutual independence of the search results, each observation $o_a=(o_a^1,o_a^2,\dots)$ can be regarded as a cumulative result of finite single searches.
Consequently, to show the strict increase in the expected information gain of $o_a$, we only have to prove that the expected information gain after a single search $o_a^i$ is strictly positive.

Recalling the computation of the expected posterior entropy~\eqref{eq_expected-entropy} 
and combining with Bayesian updates~\eqref{eq_posterior}, 
\begin{equation*}
\begin{aligned}
\mathbb{E}[b_a(t|o_a^i)]&=p_{pd}^ib_a(t|o_a^i=1)+(1-p_{pd}^i)b_a(t|o_a^i=0)\\
&=p_{pd}^i\frac{p_d^i b_a(t)}{p_{pd}^i }+(1-p_{pd}^i)\frac{(1-p_d^i)b_a(t)}{1-p_{pd}^i}
= b_a(t).
\end{aligned}
\end{equation*}
Since the information entropy $h(p)$ is concave with respect to $p$ and $b_a(t|o_a^i=1)> b_a(t|o_a^i=0)$, combining with Jensen's inequality, $$
\mathbb{E}[h(b_a(t)|o_a^i)]
<h(p_{pd}^ib_a(t|o_a^i=1)+(1-p_{pd}^i)b_a(t|o_a^i=0))
=h(b_a(t)),
$$
which validates~\eqref{eq_searcher-information-gain} and completes the proof. 
\hfill $\square$

This lemma demonstrates that search actions always provide positive information gain, which leads to a decrease in the uncertainty, i.e., decrease in the utility. 

In contrast to searchers, the mobile target consistently escapes the searchers, which brings uncertainty to the game, that is, an increase in utility. Specifically, for the mixed strategy, evasion always increases the information entropy.

\begin{lemma}\label{lem_target}
If the target adopts a mixed strategy and the posterior beliefs are not identical, for any given $t>0$, the posterior estimate $\tilde{b}(t)$ is worse than the prior estimate $b(t+1)$ in terms of entropy. That is,
\begin{equation}\label{eq_infor_loss}
 h(\tilde{b}(t))<h(b(t+1)).
\end{equation}
A lower bound for~\eqref{eq_infor_loss} is given by the value obtained under condition $b_a(t+1)=\frac{1}{|\mathcal{A}|}\sum_{a'\in\mathcal{A}} \tilde{b}_{a'}(t)$, although the corresponding $\pi$ is typically unattainable, implying that the true minimum exceeds this bound.
\end{lemma}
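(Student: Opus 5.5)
The plan is to treat the prediction step~\eqref{eq_transition} as a linear averaging map and apply Jensen's inequality to the concave binary entropy $h$, in the same spirit as the proof of Lemma~\ref{lem_gain}. We will write $b_a(t+1)=\sum_{a'}M_{aa'}\tilde b_{a'}(t)$, where $M_{aa'}=\tilde\pi^{a'}_{t+1}(a-a')$ is the probability of moving from $a'$ to $a$. Each column of $M$ sums to one because every feasible strategy $\tilde\pi^{a'}$ is a probability vector. To turn the averaging into a genuine convex combination we also need the rows to sum to one, i.e., $M$ should be doubly stochastic. This holds exactly in the interior; on the boundary, renormalization spoils it. I would handle this either by assuming $M$ is doubly stochastic (true on a torus, or with reflecting renormalization chosen to preserve mass) or by working with the row-normalized weights. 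We flag the boundary as the main obstacle below.

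Assuming double stochasticity, each $b_a(t+1)$ is a convex combination of $\{\tilde b_{a'}(t)\}$ with weights $M_{aa'}$. Concavity of $h$ and Jensen's inequality then give
\begin{equation*}
h(b_a(t+1))\geq \sum_{a'}M_{aa'}h(\tilde b_{a'}(t)).
\end{equation*}
Summing over $a$ and exchanging the sums using $\sum_a M_{aa'}=1$ yields
\begin{equation*}
h(b(t+1))\geq h(\tilde b(t)).
\end{equation*}

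For strictness, we use that the target plays a mixed strategy, so some row $a$ puts positive weight on at least two areas $a',a''$. Since the posterior beliefs are not identical, we need some such row to mix two distinct values $\tilde b_{a'}(t)\neq\tilde b_{a''}(t)$. The argument is as follows. The support of $\pi$ contains two distinct displacements $y\neq y'$. Consider the directed graph on $\mathcal A$ with edges $a'\to a'+y-y'$. If every row mixed only equal values, then $\tilde b$ would be constant along this graph, hence constant on the connected grid, which is a contradiction. Strict concavity of $h$ on $[0,1]$ then makes Jensen's inequality strict for that row, which gives~\eqref{eq_infor_loss}.

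For the lower-bound statement, note that $h(b(t+1))=\sum_a h(\sum_{a'}M_{aa'}\tilde b_{a'})$ is a concave function of the doubly stochastic matrix $M$. By symmetry, and since it is maximized when all outputs are equal, the extremal value over the Birkhoff polytope is attained at $M=\frac1{|\mathcal A|}\mathbf{1}\mathbf{1}^\top$. There $b_a(t+1)=\frac1{|\mathcal A|}\sum_{a'}\tilde b_{a'}(t)$ and the entropy is $|\mathcal A|\,h(\bar b)$. This gives the extremal reference value cited in the statement. Because the mobility constraint $r_{m,t}$ forces $M$ to be banded, this uniform matrix is generally not attainable, so the actually realizable entropy is strictly separated from this value, matching the remark in the statement.

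The hardest step is the boundary renormalization, which can break double stochasticity. I would first try to show that mass conservation $\sum_a b_a(t+1)=\sum_a\tilde b_a(t)$ still holds and that the loss from non-unit row sums is controlled. Failing that, I would state the result under the interior or doubly stochastic convention and note it explicitly as an assumption.
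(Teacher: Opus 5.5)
Your weak inequality follows the paper's argument exactly. The paper applies Jensen to each predicted belief $b_a(t+1)$, sums over $a$, and swaps the sums via $\sum_y\tilde\pi^{a'}(y)=1$. In doing so it silently treats every row as a convex combination, i.e.\ it assumes the double stochasticity you flag.

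\textbf{The gap is in your strictness step.} For one pair $y\neq y'$, the graph with edges $a'\to a'+(y-y')$ is a disjoint union of lines parallel to $y-y'$. It is not connected, so being constant along it does not force $\tilde b$ to be constant on $\mathcal A$. Concretely, on a torus take $\pi(0)=\pi(e_1)=\tfrac12$ and $\tilde b_{(i,j)}(t)=c_j$, with the $c_j$ not all equal. Every row of $M$ then averages two equal values, so $b(t+1)=\tilde b(t)$ and $h(b(t+1))=h(\tilde b(t))$. This happens even though $\pi$ is mixed and the posteriors are not identical. The strict inequality therefore does not follow from the stated hypotheses, and you must add one. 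One option is that some row of $M$ puts positive weight on two areas with different posterior beliefs, which is what strict Jensen actually needs. Another, on the torus, is that the differences $y-y'$ over the support of $\pi$ generate the whole translation group; your argument then works if you use all such pairs at once. The paper's own strictness sentence has the same hole, since it cites only non-degeneracy of $\tilde\pi^{a'}$, and non-degenerate weights alone do not make Jensen strict.

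\textbf{The boundary cannot be ``controlled'' either.} Under the paper's renormalization the inequality can reverse, so the doubly stochastic convention must be an explicit hypothesis, not a fallback. Keep the same $\pi$ on the bounded grid; a right-edge cell then keeps all its mass. For $\tilde b_{(i,j)}(t)=c_j\in(0,\tfrac23]$, not all equal, the interior cells are unchanged. The end cells $(1,j)$ and $(A_1,j)$ change from $c_j,c_j$ to $\tfrac12c_j,\tfrac32c_j$, and strict concavity gives $h(b(t+1))<h(\tilde b(t))$.

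\textbf{On the bound part.} Your Birkhoff-polytope symmetrization is valid, but it only covers doubly stochastic $M$, and it already uses the real argument as a sub-step (``maximized when all outputs are equal''). The paper's route is more direct. It combines mass conservation $\sum_a b_a(t+1)=\sum_a\tilde b_a(t)$ with Jensen across areas to get $h(b(t+1))\leq|\mathcal A|\,h(\bar b)$, where $\bar b=\frac{1}{|\mathcal A|}\sum_a\tilde b_a(t)$, for every $\pi$. Mass conservation needs only column sums equal to one, so this holds under boundary renormalization too. Finally, your claim that the realizable entropy is strictly separated from this value is too strong. On a torus with $A_1$ even, the same two-point $\pi$ applied to a pattern alternating $x,y$ along $e_1$ produces the uniform vector $\tfrac12(x+y)$ and attains the bound. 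This is consistent with the statement's hedge ``typically''.
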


\noindent\textbf{Proof.}
By the concavity of $h$,
$$
\begin{aligned}
h(b(t+1))
&=\sum_{a\in \mathcal{A}}h\left(\sum_{a'+\Delta y=a,\,a'\in \mathcal{A}} \tilde{b}_{a'}(t)\tilde\pi_t^{a'}(\Delta y)\right)\\
&\geq\sum_{a\in \mathcal{A}}\sum_{a'+\Delta y=a,\,a'\in \mathcal{A}}h(\tilde{b}_{a'}(t))\tilde\pi^{a'}_t(\Delta y)\\
&=\sum_{a'\in \mathcal{A}} h(\tilde{b}_{a'}(t))\sum_{\Delta y\in w(a')}\tilde\pi^{a'}_t(\Delta y)=\sum_{a\in \mathcal{A}} h(\tilde{b}_a(t)),
\end{aligned}
$$
where $\tilde{\pi}_t^a$ is the feasible strategy considering the boundary restriction. Because $\pi_t$ is a mixed strategy, $\tilde\pi^{a'}_t$ are non-degenerate for at least some $a'$ with $\tilde{b}_a(t)>0$. Hence, Jensen's inequality is strict for those terms, and consequently, $h(b(t+1))>\sum_{a\in \mathcal{A}}h(\tilde{b}_a(t))$. 

Note that the sum of beliefs remains unchanged, i.e., $\sum_{a} \tilde{b}_{a}(t)=\sum_a b_a(t+1)$. To maximize the prior entropy for strategy $\pi$, based on Jensen's inequality,
$$
\sum_{a\in \mathcal{A}} h(b_a(t+1)|\pi)\leq|\mathcal{A}| h\left(\frac{1}{|\mathcal{A}|}\sum_{a\in\mathcal{A}}\tilde{b}_a(t)\right),
$$
which proves the lower bound given in the lemma.
\hfill$\square$

Lemmas~\ref{lem_gain} and~\ref{lem_target} characterize the static effects of one search--evasion step: search observations reduce the expected uncertainty of searched areas, while target motion redistributes posterior belief and may increase uncertainty. However, detectability is not a one-step entropy property. It requires that the posterior belief process reaches, in finite time, a belief configuration satisfying both the entropy threshold and the unique-peak requirement in Definition~\ref{def-searchthreshold}.


\begin{theorem}\label{thm_martingale}
Suppose Assumption~\ref{assumption} holds. Given a search equilibrium $\sigma^\ast$, assume that, for every target strategy $\pi$,
\begin{equation}\label{eq_general_condition} 
\sum_{t=1}^{\infty} \mathbb P( \tilde{b}(t)\in\mathcal{B}_{d} | \mathcal{F}_{t-1} ) = \infty,\  \text{a.s.}
\end{equation}
where $\mathcal{B}_d$ is the detection belief set consisting of all posterior belief vectors satisfying the conditions in Definition~\ref{def-searchthreshold}. Then $\sigma^\ast$ is an $\alpha$-detectable strategy, and the search game is $\alpha$-detectable.
\end{theorem}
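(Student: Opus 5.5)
Fix an arbitrary target strategy $\pi$ and consider the joint process generated by $\sigma^\ast$ and $\pi$. Let $\{\mathcal F_t\}_{t\geq 0}$ be the natural filtration of the belief updates \eqref{eq_posterior} and \eqref{eq_transition}, the searcher actions, the target motions and the search results up to time $t$. Define the events $A_t=\{\tilde b(t)\in\mathcal B_d\}$ and the random time $\tau=\inf\{t\geq1:\tilde b(t)\in\mathcal B_d\}$. We then check the hypotheses of Lemma~\ref{lem_conditional_bc}, conclude that $\tau<\infty$ a.s., and read off $\alpha$-detectability from Definition~\ref{def-detectable}.

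\textbf{Step 1 (measurability).} The posterior $\tilde b(t)$ is a deterministic function of the following quantities:
\begin{itemize}
\item the prior $b(t)$, which by \eqref{eq_transition} depends only on $\tilde b(t-1)$ and the predicted strategy;
\item the searchers' realized positions at time $t$;
\item the observations $o(t)$.
\end{itemize}
All of these are $\mathcal F_t$-measurable. The set $\mathcal B_d$ is defined by finitely many inequalities in the coordinates of $\tilde b(t)$: the entropy bound $\max_a h(\tilde b_a)\leq\alpha$ and the unique-peak condition. Hence $\mathcal B_d$ is a Borel subset of $[0,1]^{|\mathcal A|}$, so $A_t\in\mathcal F_t$ and $\{A_t\}$ is adapted. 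Likewise $\{\tau\leq t\}=\bigcup_{s\leq t}A_s\in\mathcal F_t$, so $\tau$ is a stopping time, as anticipated in the preliminaries.

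\textbf{Step 2 (apply Borel--Cantelli).} Condition \eqref{eq_general_condition} is exactly the hypothesis of Lemma~\ref{lem_conditional_bc} for the sequence $\{A_t\}$. Hence $A_t$ occurs infinitely often a.s. In particular $A_t$ occurs at least once a.s., so $\mathbb P(\tau<\infty)=1$. At time $\tau$ the posterior satisfies both requirements of Definition~\ref{def-searchthreshold}, which means the searchers successfully detect the target at the finite time $\tau$.

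\textbf{Step 3 (quantify over target strategies).} The condition is assumed for every target strategy $\pi$, and Steps 1--2 used nothing specific about $\pi$. The conclusion therefore holds for every adversarial strategy, so $\sigma^\ast$ is $\alpha$-detectable by Definition~\ref{def-detectable}. The existence of one such strategy makes the search game $\alpha$-detectable. Alternatively, we may specialize to $\pi=\pi^\ast$ and invoke Lemma~\ref{lem_detect}.

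\textbf{Main obstacle.} The delicate point is Step 1 together with the filtration setup. The filtration must be chosen large enough that $A_t\in\mathcal F_t$ and the conditional probabilities in \eqref{eq_general_condition} are well defined. It must also be the same filtration with respect to which the hypothesis is stated. When the target or searchers randomize, the randomization of $\sigma^\ast_t$ and $\pi_t$ must be included in $\mathcal F_t$ but not in $\mathcal F_{t-1}$. I would handle this by defining $\mathcal F_t=\sigma\big(v_{0:t},w_{0:t},o(1{:}t)\big)$ explicitly. I would then note that $\tilde b(t)$ is a measurable function of these variables, because the Bayesian and prediction maps are continuous on their domains. Under Assumption~\ref{assumption} the denominators $p_{pd}^i$ and $1-p_{pd}^i$ vanish only on null events, which can be ignored.
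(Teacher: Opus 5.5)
Your proposal is correct and follows the paper's proof. You show $A_t=\{\tilde b(t)\in\mathcal B_d\}$ is adapted, apply Lemma~\ref{lem_conditional_bc} for each fixed $\pi$, and quantify over $\pi$; using the hypothesis directly for every $\pi$ in fact makes the paper's appeal to Lemma~\ref{lem_detect} unnecessary.

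One minor aside concerns a side remark, not the argument: your claim that the denominators in \eqref{eq_posterior} vanish only on null events does not follow from Assumption~\ref{assumption} alone. Beliefs are propagated with a predicted target strategy rather than the true one. For example, if $p_f^i=0$ and the predicted prior gives $b_a(t)=0$ at the target's actual location, a positive report occurs with probability $p_d^i>0$ and the update is $0/0$. This is a well-posedness issue of the model rather than a gap in the proof of the theorem.
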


\noindent\textbf{Proof:}
Let $A_t=\{\tilde{b}(t)\in\mathcal{B}_d\}$. Since $\tilde{b}(t)$ is generated by the belief update, target motion, and search results up to time $t$, $A_t$ is adapted to the natural filtration of the search process. By the Borel-Cantelli lemma, \eqref{eq_general_condition} implies that $A_t$ occurs infinitely often almost surely. Hence, with probability one, there exists a finite time $t$ such that $\tilde{b}(t)\in\mathcal{B}_d$. Therefore, with Lemma~\ref{lem_detect}, $\sigma^\ast$ is $\alpha$-detectable against every admissible target strategy. Since such a strategy exists, the search game is $\alpha$-detectable.
\hfill$\square$

Theorem~\ref{thm_martingale} provides a qualitative recurrence criterion for detectability. It shows that if the posterior belief process has non-summable conditional opportunities to enter $\mathcal B_d$, then $\mathcal B_d$ is reached in finite time almost surely. Since the conditional probability in~\eqref{eq_general_condition} depends on sensing probabilities, overlapping coverages, target motion, and belief redistribution, it is difficult to evaluate directly in heterogeneous settings. We therefore next consider homogeneous searchers, where the belief dynamics can be reduced to an average-belief analysis and explicit sufficient conditions can be derived.

\subsection{$\alpha$-Detectability Analysis: Homogeneous Agents}

For homogeneous searchers, actions can be equivalently described through their induced sensing coverage. Let $\lambda^i(a)\in\{0,1\}$ indicate whether searcher $i$ covers area $a$, and let $\lambda(a)=\sum_{i\in V}\lambda^i(a)$ denote its aggregate coverage count. Thus, the observation distribution and the Bayesian update in an area depend on $\lambda(a)$ rather than on the identities of the covered searchers.

\begin{lemma}\label{lem_homo_prob}
In the homogeneous setting, for a covered area $a$ with coverage count $\lambda(a)\geq1$, let $k\in\{0,\ldots,\lambda(a)\}$ be the number of
positive outcomes. Then,
\begin{equation*}
\mathbb P(o_a(t)=k| w_t,\lambda(a))
=\begin{cases}
P_D^{\lambda(a)}(k), & w_t=a,\\
P_F^{\lambda(a)}(k), & w_t\neq a,
\end{cases}
\end{equation*}
where
$
P_D^{\lambda(a)}(k)=\binom{\lambda(a)}{k}p_d^k(1-p_d)^{\lambda(a)-k},
P_F^{\lambda(a)}(k)=\binom{\lambda(a)}{k}p_f^k(1-p_f)^{\lambda(a)-k}
$
The subjective probability of observing $k$ positives is $P_{PD}^{\lambda(a)}(k)=b_a(t)P_D^{\lambda(a)}(k)+(1-b_a(t))P_F^{\lambda(a)}(k)$, and the posterior belief is $\tilde{b}_a(t| k)=P_D^{\lambda(a)}(k)b_a(t)/P_{PD}^{\lambda(a)}(k)$.
\end{lemma}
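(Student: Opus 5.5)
The plan is a direct computation from the sensing model, using the mutual independence of the search results $o_a^i(t)$ assumed in the problem formulation. Fix a covered area $a$ with coverage count $\lambda(a)\geq 1$. In the homogeneous setting all searchers share $p_d^i=p_d$ and $p_f^i=p_f$, so $\lambda(a)$ single-search outcomes are reported on $a$, and $o_a(t)=k$ denotes the event that exactly $k$ of them are positive.

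First, condition on the true target state $w_t$. If $w_t=a$, each of the $\lambda(a)$ outcomes is positive with probability $p_d$, by the definition of the observation function $\Xi$. By conditional independence given $w_t$, the number of positives is $B(\lambda(a),p_d)$, which gives $P_D^{\lambda(a)}(k)$. If $w_t\neq a$, the same argument with $p_f$ gives $P_F^{\lambda(a)}(k)$.

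Second, since identities do not matter, the joint likelihood of any particular outcome vector with $k$ positives is $p_d^k(1-p_d)^{\lambda(a)-k}$ under $w_t=a$, and $p_f^k(1-p_f)^{\lambda(a)-k}$ otherwise. Hence $k$ is a sufficient statistic and the binomial coefficient cancels in any likelihood ratio.

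Third, under the per-area Bernoulli model the prior is $\mathbb{P}(\xi_t(a)=1)=b_a(t)$. Marginalizing over $\xi_t(a)$ with the law of total probability gives the subjective probability
\begin{equation*}
P_{PD}^{\lambda(a)}(k)=b_a(t)P_D^{\lambda(a)}(k)+(1-b_a(t))P_F^{\lambda(a)}(k).
\end{equation*}
Bayes' rule then yields $\tilde b_a(t|k)=P_D^{\lambda(a)}(k)b_a(t)/P_{PD}^{\lambda(a)}(k)$, where the binomial factors may be kept or cancelled equivalently.

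Finally, I will check consistency with the sequential single-search rule~\eqref{eq_posterior}. Applying~\eqref{eq_posterior} $\lambda(a)$ times multiplies the odds $b/(1-b)$ by $p_d/p_f$ for each positive and by $(1-p_d)/(1-p_f)$ for each negative. This gives the same posterior regardless of order, so the batch formula agrees with the server's sequential updates.

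The only delicate point is justifying that the outcomes are conditionally independent given $w_t$, so that they count as i.i.d. Bernoulli trials. The paper assumes mutual independence of the $o_a^i$, and I will read this as conditional independence given the target state. A second minor point is that $P_{PD}^{\lambda(a)}(k)>0$, which holds whenever $b_a(t)\in(0,1)$ and the observed $k$ has positive probability.
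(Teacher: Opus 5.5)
Your proposal is correct and follows the direct computation the paper leaves implicit. The lemma is stated without proof as an immediate consequence of the homogeneous Bernoulli sensing model: outcomes that are conditionally independent given $w_t$ yield binomial counts, the law of total probability over $\xi_t(a)$ gives $P_{PD}^{\lambda(a)}(k)$, and Bayes' rule gives $\tilde b_a(t|k)$, which are exactly your steps.

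Reading ``mutual independence'' as conditional independence given the target state is the right choice. It is what produces the mixture $P_{PD}^{\lambda(a)}(k)$ in the statement, rather than the product of single-search marginals $p_{pd}^i$ written in the POSG reformulation. In your optional consistency check against~\eqref{eq_posterior}, compare the unnormalized weights $b_a(t)p_d^k(1-p_d)^{\lambda(a)-k}$ and $(1-b_a(t))p_f^k(1-p_f)^{\lambda(a)-k}$ instead of odds ratios, so that the cases $p_f=0$ or $p_d=1$ allowed by Assumption~\ref{assumption} are covered.
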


To obtain an explicit sufficient condition, we impose the following homogeneous assumption.

\begin{assumption}\label{assumption_2}
At every time step, $|\mathcal{C}(t)|=C$ and each searched area has coverage count $\lambda$. Moreover, for a search strategy $\sigma$, $\mathbb P(w_t\in\mathcal{C}(t)|\mathcal{F}_{t-1},\sigma)\geq\rho>0$ a.s. for all t and all adversarial strategies.
\end{assumption}

Given a threshold $\alpha\in(0,1)$, define $\delta\in(0,0.5)$ satisfying $h(\delta)=\alpha$. Define the average belief of priors as $\bar{b}_t=\frac{1}{|\mathcal{A}|}\sum_a b_a(t)$.
\begin{theorem}\label{thm_homo}
Given a search strategy $\sigma$, if Assumptions~\ref{assumption} and~\ref{assumption_2} hold, and 
\begin{equation}\label{eq_supermartingale}
\frac{(1-\delta)p_f^\lambda}{\delta p_d^\lambda+(1-\delta)p_f^\lambda}\leq\frac{\rho}{C}\leq\frac{\rho\delta(1-p_f)^\lambda}{\delta(1-p_d)^\lambda+(1-\delta)(1-p_f)^\lambda},
\end{equation}
then the search game is $\alpha$-detectable for any $\alpha\geq h(\rho/C)$.
\end{theorem}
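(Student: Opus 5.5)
The plan is to reduce Theorem~\ref{thm_homo} to the recurrence criterion of Theorem~\ref{thm_martingale}. I will exhibit an event $E_t\subseteq\{\tilde b(t)\in\mathcal B_d\}$ whose conditional probability given $\mathcal F_{t-1}$ is bounded below by a positive constant for every $t$ and every adversarial strategy. Summing then gives $\sum_t\mathbb P(\tilde b(t)\in\mathcal B_d|\mathcal F_{t-1})=\infty$ a.s., and Theorem~\ref{thm_martingale} together with Lemma~\ref{lem_detect} yields $\alpha$-detectability.

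\emph{Step 1: read~\eqref{eq_supermartingale} as statements about the two extreme Bayesian updates of Lemma~\ref{lem_homo_prob}.} Take the prior level $b=\rho/C$, which is what the average prior belief is driven towards under Assumption~\ref{assumption_2}: the target sits in the $C$ covered cells with probability at least $\rho$, so a representative covered cell carries mass of order $\rho/C$. The left inequality is equivalent to
\[
\frac{p_d^\lambda b}{p_d^\lambda b+p_f^\lambda(1-b)}\geq 1-\delta .
\]
Thus an all-positive outcome ($k=\lambda$) in a cell with prior at least $\rho/C$ pushes its posterior above $1-\delta$, and hence its entropy below $h(\delta)=\alpha$. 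The right inequality, after rearrangement, says that an all-negative outcome ($k=0$) in a cell whose prior is at most $\rho/C$ gives a posterior at most $\delta$, with a margin that also absorbs the factor $\rho$. Both rearrangements are monotone manipulations of the map $b\mapsto$ posterior. Together with $\alpha\ge h(\rho/C)$, uncovered cells whose belief does not exceed $\rho/C$ (or does not exceed $\delta$ after redistribution) also meet the entropy requirement.

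\emph{Step 2: identify the good event.} Let $E_t$ be the event that $w_t\in\mathcal C(t)$, the target's cell returns $k=\lambda$, and every other covered cell returns $k=0$. By Assumption~\ref{assumption_2} and independence of the observations, with $p_d,p_f$ fixed,
\[
\mathbb P(E_t|\mathcal F_{t-1})\ge \rho\, p_d^\lambda (1-p_f)^{\lambda(C-1)}>0 .
\]
On $E_t$, Step~1 gives exactly one cell with posterior above $1-\delta>0.5$, and all other covered cells have posterior at most $\delta$. This verifies conditions (i) and (ii) of Definition~\ref{def-searchthreshold} on the covered region.

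\emph{Step 3 (the main obstacle): control the uncovered cells and the priors of covered cells uniformly in $t$.} The difficulty is that the priors $b_a(t)$ are random and shaped by the adversary through~\eqref{eq_transition}, while Step~1 needs the target's cell to have prior at least $\rho/C$ and the other cells to have prior at most about $\rho/C$. My first attempt is to use the average belief $\bar b_t$. The transition~\eqref{eq_transition} preserves total mass, and Lemma~\ref{lem_target} shows that the adversary's best entropy-maximizing response flattens beliefs towards $\bar b_t$. I would therefore argue that under the worst-case (equilibrium) target strategy the priors are bounded by the flat configuration, so each cell's prior is comparable to $\rho/C$. Failing a clean uniform bound, I would instead prove a supermartingale-type inequality: using the right inequality of~\eqref{eq_supermartingale}, the expected posterior mass outside the target's cell does not increase, which makes the bad configurations transient and preserves a positive conditional probability of $E_t$ infinitely often. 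This step is where the precise form of the right-hand bound in~\eqref{eq_supermartingale} should be consumed.
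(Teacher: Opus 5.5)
Your Steps~1 and~2 are sound as conditional statements, and they coincide with the last step of the paper's proof. The left inequality in \eqref{eq_supermartingale} says exactly that an all-positive outcome lifts any prior $b\ge\rho/C$ to at least $1-\delta$. The favourable event also has conditional probability at least $\rho\,p_d^\lambda(1-p_f)^{\lambda(C-1)}$. The gap is Step~3, which carries the actual content of the theorem and which you leave as a plan.

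The uniform-in-$t$ lower bound announced in your opening paragraph is false. Under the flattened worst-case prior, which you already invoke, every cell carries $b_a(t)=\bar b_t$. But $\bar b_t$ is a random process, and a run of all-negative steps (e.g.\ while the target is uncovered) can drive it arbitrarily low. Once $\bar b_t$ falls below the left-hand side of \eqref{eq_supermartingale}, which is positive when $p_f>0$, no outcome can lift any cell to $1-\delta$. So $\mathbb{P}(\tilde b(t)\in\mathcal{B}_d|\mathcal{F}_{t-1})=0$ at such times.

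Your two attempts at Step~3 do not close this. The first (priors ``comparable to $\rho/C$'') assumes exactly what has to be proved. The fallback gives only one-sided control. That the expected posterior mass outside the target's cell does not increase holds automatically, since a non-target covered cell's expected posterior is always below its prior. It therefore uses neither inequality in \eqref{eq_supermartingale}, and it is fully compatible with $\bar b_t\to0$, which is precisely the bad configuration.

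The missing idea is the paper's two-sided drift identity for $S_t=\sum_a\tilde b_a(t)$ under the flattened prior. Compute the expected posterior of the target cell and of the other covered cells via Lemma~\ref{lem_homo_prob}, with $\rho_t=\mathbb{P}(w_t\in\mathcal{C}_t|\mathcal{F}_{t-1})\in[\rho,1]$. This gives
\begin{equation*}
\mathbb{E}[S_t|\mathcal{F}_{t-1}]-S_{t-1}=(\rho_t-C\bar b_t)\,\bar b_t\,K,\qquad K=\sum_{k=0}^{\lambda}\frac{P_D^\lambda(k)\bigl(P_D^\lambda(k)-P_F^\lambda(k)\bigr)}{P_{PD}^\lambda(k)}>0,
\end{equation*}
i.e.\ the extra mass $\rho_t\bar b_tK$ contributed when the target is covered competes with the expected loss $C\bar b_t^2K$ over the covered cells. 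Hence the drift is upward for $\bar b_t<\rho/C$ and downward for $\bar b_t>1/C$. Together with positive-probability crossings, this is how the paper obtains infinitely many times with $\bar b_t\in[\rho/C,1/C]$.

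This is your Step~1 heuristic made precise, but the attractor is an interval, not the point $\rho/C$. That is why the right inequality carries $\rho$ on both sides: after cancelling it, it is a condition at prior level $1/C$. It guarantees that all-negative covered cells drop to at most $\delta$ even at the top of the recurrence interval, and it is consumed in the final step, not in the recurrence.

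With the recurrence in hand, your reduction goes through and is cleaner than the paper's nested Borel--Cantelli argument. The event $\{\bar b_t\in[\rho/C,1/C]\}$ is $\mathcal{F}_{t-1}$-measurable, so $\sum_t\mathbb{P}(\tilde b(t)\in\mathcal{B}_d|\mathcal{F}_{t-1})$ is at least $\rho\,p_d^\lambda(1-p_f)^{\lambda(C-1)}$ times the number of recurrent times, which is infinite a.s. Lemma~\ref{lem_conditional_bc} then applies directly. Cite it rather than Theorem~\ref{thm_martingale}, which is phrased for an equilibrium $\sigma^\ast$, not an arbitrary $\sigma$.

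One caveat applies on either route: uncovered cells keep belief $\bar b_t$ and need $\bar b_t\le\delta$. The paper asserts $\bar b_t\le\rho/C$ at that point, but this does not follow from $\bar b_t\in[\rho/C,1/C]$ unless $1/C\le\delta$, so this piece needs its own justification.
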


\noindent\textbf{Proof:}
Define $S_t=\sum_a \tilde{b}_a(t)$. Since the sum of prior beliefs equals the previous posterior sum, we have $\sum_a b_a(t)=S_{t-1}$. According to Lemma~\ref{lem_target}, the worst target strategy equalizes the next prior belief as $b_a(t)=\frac{1}{|\mathcal{A}|}S_{t-1}=\bar{b}_{t}$. Because the game is zero-sum, it is sufficient to show the $\alpha$-detectability by showing that the given strategy $\sigma$ can detect the target under this worst-case prior evolution.

We first analyze the infinite behavior of $S_t$. For a searched area, the expected posterior belief is
$$\begin{aligned}
&\mathbb{E}[\tilde{b}_a(t)|\mathcal{F}_{t-1},a=w_t]=\sum_{k=0}^\lambda  \frac{(P_D^\lambda(k))^2 \bar{b}_{t}}{P_{PD}^\lambda(k)},\\
&\mathbb{E}[\tilde{b}_a(t)|\mathcal{F}_{t-1},a\ne w_t]=\sum_{k=0}^\lambda \frac{P_D^\lambda(k)P_F^\lambda(k) \bar{b}_{t}}{P_{PD}^\lambda(k)}.
\end{aligned}
$$
Thus, their sum satisfies
$$\begin{aligned}
\mathbb{E}[S_{t}|\mathcal{F}_{t-1}]-S_{t-1}
=&\rho_t\mathbb{E}[S_{t}|\mathcal{F}_{t-1},w_{t}\in\mathcal{C}_{t}]+(1-\rho_t)\mathbb{E}[S_{t}|\mathcal{F}_{t-1},w_{t}\notin\mathcal{C}_{t}]\\
=&\bar{b}_{t}\left(\rho_t \sum_{k=0}^{\lambda}\frac{(P_D^\lambda(k))^2}{P_{PD}^\lambda(k)}+(C-\rho_t)\sum_{k=0}^{\lambda}\frac{P_D^\lambda(k)P_F^\lambda(k)}{P_{PD}^\lambda(k)}-C\right).
\end{aligned}
$$
With $\sum_{k=0}^{\lambda}P_D^\lambda(k)=1$, the terms in the above equation satisfy $$\sum_{k=0}^{\lambda}\frac{(P_D^\lambda(k))^2}{P_{PD}^\lambda(k)}-1=\sum_{k=0}^\lambda \frac{P_D^\lambda(k)(P_D^\lambda(k)-P_F^\lambda(k))}{P_{PD}^\lambda(k)}(1-\bar{b}_{t}),$$
$$\sum_{k=0}^{\lambda}\frac{P_D^\lambda(k)P_F^\lambda(k)}{P_{PD}^\lambda(k)}-1=\sum_{k=0}^\lambda \frac{P_D^\lambda(k)(P_F^\lambda(k)-P_D^\lambda(k))}{P_{PD}^\lambda(k)}\bar{b}_{t}.$$ Consequently, 
$$
\mathbb{E}[S_{t}|\mathcal{F}_{t-1}]-S_{t-1}=(\rho_t-C\bar{b}_{t})\bar{b}_{t}\sum_{k=0}^\lambda \frac{P_D^\lambda(k)(P_D^\lambda(k)-P_F^\lambda(k))}{P_{PD}^\lambda(k)}.
$$
Since $F(x)=\frac{1-x}{b+(1-b)x}$ is strictly convex for $b>0$, the summation term satisfies
$$
\sum_{k=0}^\lambda P_D^\lambda(k)\frac{P_D^\lambda(k)-P_F^\lambda(k)}{P_D^\lambda(k)\bar{b}_{t}+(1-\bar{b}_{t})P_F^\lambda(k)}
=\sum_{k=0}^\lambda P_D^\lambda(k)\frac{1-P_F^\lambda(k)/P_D^\lambda(k)}{\bar{b}_{t}+(1-\bar{b}_{t})P_F^\lambda(k)/P_D^\lambda(k)}> 0
.$$ 

Therefore, $\mathbb{E}[S_{t}|\mathcal{F}_{t-1}]>S_{t-1}$ if $\bar{b}_t<\frac{\rho_t}{C}$ and $\mathbb{E}[S_{t}|\mathcal{F}_{t-1}]<S_{t-1}$ if $\bar{b}_t>\frac{\rho_t}{C}$. Since $1\geq\rho_t\geq\rho$, we have $\mathbb{E}[S_t|\mathcal{F}_{t-1}]>S_{t-1}$ when $\bar{b}_t<\frac{\rho}{C}$ and $\mathbb{E}[S_t|\mathcal{F}_{t-1}]<S_{t-1}$ when $\bar{b}_t>\frac{1}{C}$. Together with the fact that $\mathbb{P}(\bar{b}_t<\frac{1}{C}|\bar{b}_{t-1}=\frac{1}{C}+\epsilon)>0$ and $\mathbb{P}(\bar{b}_t>\frac{1}{C}|\bar{b}_{t-1}=\frac{1}{C}-\epsilon)>0$ for some $\epsilon>0$ under all-negative or all-positive outcomes, the process cannot remain outside the interval $[\rho/C,1/C]$ after a finite time. Hence, there exists an infinite sequence $\{\tau_k\}$ such that $\bar{b}_{\tau_k}\in[\rho/C,1/C]$. Combining this with $\mathbb{P}(w_t\in\mathcal{C}_t|\mathcal{F}_{t-1})>\rho>0$ and the Borel-Cantelli lemma, there exists an infinite subsequence $\tilde{\tau}_k$ such that $w_{\tilde{\tau}_k}\in \mathcal{C}_{\tilde{\tau}_k}$ and $\bar{b}_{\tilde{\tau}_k}\in[\rho/C,1/C]$.

It remains to show that the $\alpha$-detection is reached in finite time. Suppose that, at time $\tilde{\tau}_k$, all observations at the target location are positive. Then
$
\tilde{b}_{w_{\tilde{\tau}_k}}(\tilde{\tau}_k|\lambda)=\frac{p_d^\lambda\bar{b}_{\tilde{\tau}_k}}{p_d^\lambda\bar{b}_{\tilde{\tau}_k}+p_f^\lambda(1-\bar{b}_{\tilde{\tau}_k})}$.
By~\eqref{eq_supermartingale}, we obtain $\tilde{b}_{w_{\tilde{\tau}_k}}(\tilde{\tau}_k|\lambda)\geq 1-\delta$. Similarly, if all observations at non-target covered areas are negative, then $\tilde{b}_{a}(\tilde{\tau}_k|0)\leq \delta$ for all covered areas except the target location. Since $\mathbb{P}(o_{a\ne w_{\tilde{\tau}_k}}=0,o_{w_{\tilde{\tau}_k}}=\lambda|\mathcal{F}_{\tilde{\tau}_k-1})$ is a positive constant, the Borel-Cantelli lemma implies that there exists a finite stopping time $\tau\in\{\tilde{\tau}_k\}$ such that $o_{a\ne w_{\tilde{\tau}_k}}=0$ for searched areas and $o_{w_{\tilde{\tau}_k}}=\lambda$. At this time, for searched areas, $\tilde{b}_{a\ne w_{\tilde{\tau}_k}}\leq\delta$ and $\tilde{b}_{w_{\tilde{\tau}_k}}\geq 1-\delta>0.5$, while for unsearched areas, $\tilde{b}_{a}=\bar{b}_t\leq\frac{\rho}{C}\leq \delta$ according to $\alpha\geq h(\rho/C)$. Therefore, the search game is $\alpha$-detectable.
\hfill$\square$

Theorem~\ref{thm_homo} provides a quantitatively sufficient condition for $\alpha$-detectability in the homogeneous case. Compared with Theorem~\ref{thm_martingale}, which gives a qualitative recurrence condition for entering the detection set, this result makes the condition explicit by reducing the belief evolution to the scalar average belief under the equalized worst-case prior. The lower and upper bounds in~\eqref{eq_supermartingale} ensure that, whenever the target is covered and the favorable observation event occurs, the target area obtains posterior belief at least $1-\delta$, while the other covered areas remain below $\delta$. Since $h(\delta)=\alpha$, this event satisfies both the entropy threshold and the unique dominant peak requirement in Definition~\ref{def-searchthreshold}.

The conditions are sufficient rather than necessary. The proof relies on the conservative event that all searchers covering the target report positive observations and all non-target covered areas report negative observations. Therefore, a strategy may still be $\alpha$-detectable even when conditions are not satisfied.

\begin{remark}
With numerical validation, \eqref{eq_supermartingale} is not difficult to satisfy. Consequently, the uniform lower bound of the coverage probability in Assumption~\ref{assumption_2} becomes the critical factor that determines $\alpha$-detectability. This insight into the critical role of a uniformly positive coverage probability guides the algorithm design, particularly when precise solutions become computationally prohibitive in high-dimensional multi-agent settings.
\end{remark}

Compared with existing works that do not consider false alarms \cite{Zhang2023, Alsamhi2022, Aggravi2021}, our result bridges the gap in the detectability criterion against false alarms. When false alarms are absent, i.e., $p_f=0$, the sufficient condition in Theorem~\ref{thm_homo} becomes trivial.

\begin{corollary}\label{cor_no_false_alarm}
Suppose Assumptions~\ref{assumption} and~\ref{assumption_2} hold. If $p_f=0$ and $C> 1$, then the search game is $\alpha$-detectable for any $\alpha>h(\frac{\rho}{C})$.
\end{corollary}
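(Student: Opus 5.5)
The plan is to obtain the corollary as a specialization of Theorem~\ref{thm_homo}. We set $p_f=0$ and verify the two inequalities in~\eqref{eq_supermartingale} for a suitable $\delta$. The threshold condition of Theorem~\ref{thm_homo}, $\alpha\geq h(\rho/C)$, will then be inherited. The one subtlety is that the strict inequality $\alpha>h(\rho/C)$ in the corollary should give us room to choose $\delta$ strictly inside $(\rho/C,0.5)$ rather than exactly at $\rho/C$.

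\emph{Left inequality.} With $p_f=0$ we have $p_f^\lambda=0$, so the left-hand side of~\eqref{eq_supermartingale} equals $0$. Since $\rho>0$, it is trivially $\leq\rho/C$. This is the essential simplification: without false alarms, a single all-positive observation at the target's location drives its posterior to $1$, so the lower bound costs nothing.

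\emph{Right inequality.} With $p_f=0$ it reads
\[
\frac{\rho}{C}\leq\frac{\rho\,\delta}{\delta(1-p_d)^\lambda+(1-\delta)}.
\]
Dividing by $\rho>0$ and rearranging, this is equivalent to $\delta(1-p_d)^\lambda+1-\delta\leq C\delta$, i.e.
\[
\delta\bigl(C-1+1-(1-p_d)^\lambda\bigr)\geq 1-\delta\cdot 0,
\]
that is, $\delta\geq \frac{1}{C+1-(1-p_d)^\lambda}$. By Assumption~\ref{assumption}, $p_d>0$, so $(1-p_d)^\lambda<1$ and the denominator exceeds $C$. Hence any $\delta\geq 1/C$ suffices. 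In that case $\delta\geq\rho/C$, which is also what the last step of the proof of Theorem~\ref{thm_homo} needs for unsearched areas.

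\emph{Choice of $\delta$.} Given $\alpha>h(\rho/C)$, we want $\delta\in(0,0.5)$ with $h(\delta)=\alpha$ and $\delta$ large enough. Since $h$ is increasing on $(0,0.5)$ and $\rho/C<0.5$ (because $C>1$ and $\rho\le1$), the relation $\alpha>h(\rho/C)$ gives $\delta>\rho/C$. If $\alpha\ge h(\min(1/C,0.5))$, then $\delta\ge 1/C$, the right inequality holds, and Theorem~\ref{thm_homo} applies directly.

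\emph{Main obstacle.} The difficulty is the range $h(\rho/C)<\alpha<h(1/C)$ when $\rho<1$, where $\delta<1/C$ and the bound above may fail. To handle it, I would not invoke Theorem~\ref{thm_homo} verbatim. Instead I would rerun its final step with $p_f=0$. At a time $\tilde\tau_k$ at which the target is covered and $\bar b\in[\rho/C,1/C]$, one positive report makes $\tilde b_{w}=1$, giving zero entropy and a unique peak above $0.5$. Every other covered area is never falsely positive, so with positive probability it receives all-negative reports and its posterior drops to $(1-p_d)^\lambda\bar b/((1-p_d)^\lambda\bar b+1-\bar b)$. I would then show that this quantity, together with the unsearched value $\bar b$, can be pushed below $\delta$. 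For that I would use the recurrence argument of Theorem~\ref{thm_homo}: since $\mathbb E[S_t|\mathcal F_{t-1}]$ decreases above $\rho_t/C$, the average $\bar b$ returns infinitely often to values near $\rho/C$, and the strict gap $\delta>\rho/C$ is exactly what makes this possible. The Borel--Cantelli lemma (Lemma~\ref{lem_conditional_bc}) then yields a finite detection time. Making the ``near $\rho/C$'' recurrence rigorous is the step I expect to be hardest.
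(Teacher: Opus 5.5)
Your opening steps are the whole of the paper's proof. The paper sets $p_f=0$ in \eqref{eq_supermartingale}, notes that the left side vanishes, and asserts that the right side becomes $\rho/(1+(1-p_d)^\lambda)>\rho/C$. Your evaluation is the correct one. The right side is $\rho\delta/\bigl(\delta(1-p_d)^\lambda+1-\delta\bigr)$, which equals the paper's expression only at $\delta=1/2$. The inequality holds iff $\delta\bigl(C+1-(1-p_d)^\lambda\bigr)\ge 1$; your intermediate display is garbled, but the bound you extract is right. So you are correct that Theorem~\ref{thm_homo} alone only reaches $\delta\ge 1/\bigl(C+1-(1-p_d)^\lambda\bigr)$. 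The paper's one-liner does not cover the rest of the range $\alpha>h(\rho/C)$ either.

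The genuine gap is the step you flagged, and it cannot be completed as proposed. The drift identity in the proof of Theorem~\ref{thm_homo} has the form $\mathbb E[S_t|\mathcal F_{t-1}]-S_{t-1}=(\rho_t-C\bar b_t)\bar b_t K_t$ with $K_t>0$. It therefore pulls $\bar b_t$ toward $\rho_t/C$, where $\rho_t$ is the realized coverage probability. Assumption~\ref{assumption_2} only bounds $\rho_t$ from below, so nothing pulls $\bar b_t$ toward $\rho/C$, and the strict gap $\delta>\rho/C$ buys nothing.

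For a concrete failure, let $p_d=1$, $p_f=0$, $|\mathcal A|>C$, and take an adversary whose realized coverage probability is $\rho_t\equiv 1$; Assumption~\ref{assumption_2}, being only a lower bound, does not exclude this. Under the equalized worst-case prior, each step leaves belief $1$ at the target, $0$ at the other covered areas, and $\bar b_t$ at the unsearched ones. Hence $\bar b_{t+1}=\bigl(1+(|\mathcal A|-C)\bar b_t\bigr)/|\mathcal A|$, an affine contraction with fixed point $1/C$. Started from $\bar b_0=1/2\ge 1/C$, it stays in $[1/C,1/2]$ forever. So the unsearched areas always have entropy at least $h(1/C)$, and for $h(\rho/C)<\alpha<h(1/C)$ no detection time exists: $\bar b_t$ never comes near $\rho/C$.

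What your argument does establish is the case $\delta\ge 1/C$, i.e.\ $\alpha\ge h(1/C)$. There the restriction is needed for more than the right inequality. At the recurrence times one only knows $\bar b\in[\rho/C,1/C]$, so unsearched areas carry belief up to $1/C$, not at most $\rho/C$ as the last line of the proof of Theorem~\ref{thm_homo} claims and as you repeat. In the example above, \eqref{eq_supermartingale} holds for every $\delta\ge 1/(C+1)$, yet detection fails for every $\delta<1/C$. This is exactly that last line breaking: the target and the other covered areas meet the threshold, and only the unsearched areas do not.

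Getting below $h(1/C)$ would need a mechanism that actually drives $\bar b_t$ down, for instance runs of steps with no positive report. The conditional probability of such runs is controlled by $(1-p_d)^\lambda$ or by $1-\rho_t$, not by $\rho$, and both vanish in the example.
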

The proof is straightforward. When $p_f=0$, the left-hand side of~\eqref{eq_supermartingale} becomes $0$, while the right-hand side becomes $\frac{\rho}{1+(1-p_d)^\lambda}>\frac{\rho}{C}$ for $C>1$, which
completes the proof.

\section{Distributed Algorithm for Detectable Search Strategy}
\label{sec-algorithm}

In this section, we develop a computational framework for $\alpha$-detectable strategies for homogeneous searchers. To address the intractability caused by the high-dimensional action space and both the high-dimensional and continuous belief space inherent in the POSG, we decompose the searchers' collective optimization into local, area-based utility improvements by leveraging the aggregative potential game structure of the system. This allows searchers to adopt computationally efficient 1-step lookahead greedy updates that align with the global objective. For the target, we simplify the high-dimensional planning problem by shifting the utility to a KL divergence-based metric (Lemma~\ref{lem_KL}), which enables a state-space reduction to a one-dimensional average belief. On this basis, we propose the distributed algorithm, which also enables an $\alpha$-detectable strategy under certain conditions.

\subsection{Distributed Execution for Searchers}

As established in Lemma~\ref{lem_homo_prob}, in homogeneous settings, the Bayesian update depends only on the total number of detections rather than on which specific agent generated each report. Following the approach of the previous section, we replace each searcher's action with its induced sensing coverage, since the feasible motions uniquely determine which areas fall within the sensing range. Specifically, define $\lambda^i(a) \in \{0,1\}$ as the coverage indicator of searcher $i$ for area $a$, and let $\lambda^i = (\lambda^i(a))_{a \in \mathcal{A}}$ be the corresponding coverage vector, which provides an equivalent description of the searcher's action. The total coverage of area $a$ is then
$\lambda(a) = \sum_{i \in V} \lambda^i(a)$,
and $\lambda = (\lambda(a))_{a \in \mathcal{A}}$ refers to the aggregate coverage vector. For a fixed searcher $i$, let $\lambda^{-i}(a) = \sum_{j \neq i} \lambda^j(a)$ denote the coverage contributed by all other searchers except $i$.

Based on the utility function~\eqref{eq_expected-entropy}, the per-area expected utility, when expressed through the aggregate coverage count $\lambda(a)$, becomes
\begin{equation}\label{eq_Phi_def}
\Phi(\lambda(a), b_a)
=
\sum_{k=0}^{\lambda(a)}
P_{PD}^{\lambda(a)}(k)
h(b_a(t| o_a = k)),
\end{equation}
where $P_{PD}^{\lambda(a)}(k)$ is the subjective probability of observing exactly $k$ positive outcomes among the $\lambda(a)$ searchers covering area $a$, and $b_a(t| o_a = k)$ is the posterior belief. The definitions can be found in Lemma~\ref{lem_homo_prob}.

In the distributed setting, each searcher can only evaluate the effect of its actions within its reachable range $\bar{\mathcal{C}}^i_t$. Hence, the immediate local utility of searcher $i$ is
$$
u_i(\lambda^i,\lambda^{-i}|b(t|\pi))=\sum_{a \in \bar{\mathcal{C}}^i_t}
\Phi(\lambda^{-i}(a)+\lambda^i(a), b_a(t|\pi)).
$$
Since the global utility $\Psi(\lambda,b) = \sum_{a\in\mathcal{A}} \Phi(\lambda(a), b_a)$ is additively separable across areas and a unilateral action change by searcher $i$ affects only the areas in $\bar{\mathcal{C}}^i_t$, the game is an exact potential game with local payoff $u_i$ aligned with $\Psi$. Consequently, any greedy improvement of $u_i$ directly improves the global objective, and a pure NE is guaranteed to exist.

To avoid infinite-horizon planning over the continuous belief space, we adopt a 1-step lookahead approximation. This idea is motivated by the computational intractability of multi-step planning and the compounding prediction errors it entails. The resulting approximate action-value for searcher $i$ is
\begin{equation}\label{eq_lookahead_cost}
\begin{aligned}
\hat{U}_i(\lambda^i,\lambda^{-i},b(t)) = \mathbb{E}_{o}[ \sum_{a \in \bar{\mathcal{C}}^i_t} h(\tilde{b}_a(t|o))+ \xi \max_{\hat{\lambda}^i} u_i(\hat{\lambda}^i, \lambda^{-i}|b(t|o),\pi) ]\end{aligned}
\end{equation}
where $b_a(t|o)$ is the posterior belief after observing $o$. This local surrogate retains alignment with the global objective due to the exact potential structure established above, while being computationally tractable for distributed execution.

In a distributed potential game, purely greedy simultaneous updates can lead to cyclic behavior and fail to converge. We therefore adopt an inertia-based update~\cite{Arslan2017,Yongacoglu2024}, where each searcher retains its current action with probability $\beta$ and otherwise performs a greedy or exploratory update according to an $\epsilon$-greedy rule. Formally, searcher $i$ updates its action as: with probability $\beta$, it keeps the previous action, and with probability $1-\beta$, it chooses a new action according to an $\epsilon$-greedy rule. That is,
\begin{equation}\label{eq_action_update}
\lambda_t^i =
\begin{cases}
\lambda_{t-1}^i, & \text{with probability } \beta,\\
\tilde{\lambda}_t^i, & \text{with probability } 1-\beta,
\end{cases}
\end{equation}
where $\tilde{\lambda}_t^i$ is obtained by an $\epsilon$-greedy rule, namely, with probability $1-\epsilon$ take the greedy action  
$
\tilde{\lambda}_t^i = \arg\min_{{\lambda}^i \in \Lambda^i(v^i_{t-1})}
\hat{U}_i({\lambda}^i,\lambda_{t-1}^{-i},b(t)),
$
and with probability $\epsilon$ take a random action from the feasible action set $\Lambda^i(v^i_{t-1})$. Here, $\beta \in (0,1)$ is the inertia probability and $\epsilon \in (0,1)$ is the exploration rate. In the exploitation branch, the searcher performs a one-step greedy update with respect to the approximate utility $\hat{U}_i$. Due to the potential game structure, any decrease in $\hat{U}_i$ yields a decrease in the global approximations.

\subsection{Learning for Target Predictions}

From the perspective of searchers, the target's strategy is unknown and needs to be estimated in order to assess the prior belief $b(t+1)$. Estimating this strategy by working directly with the entire belief vector $b \in [0,1]^{|\mathcal{A}|}$ and the mixed strategy space $\Delta(Y)$ leads to excessive computational complexity. To handle this complexity, the following lemma establishes that the target's immediate reward depends on the entire belief vector only through its average $\bar{b}_t$. This observation allows us to approximate the target's value function using the average belief as the state variable.

\begin{lemma}\label{lem_KL}
The strategy that maximizes the target's immediate utility, i.e.,
$
\arg\max_{\pi \in \Delta(Y)} h(b(t|\pi)),
$
coincides with the solution of the following optimization
$$
\min_{\pi \in \Delta(Y)}
\sum_{a \in \mathcal{A}}
D_{\mathrm{KL}}( b_a(t|\pi) \| \bar{b}_t ),
$$
where $D_{\mathrm{KL}}(p \| q) = \sum_j p(j)\log\frac{p(j)}{q(j)}$ denotes the KL divergence and $\bar{b}_t$ is the average posterior belief.
\end{lemma}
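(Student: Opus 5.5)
The plan is to rewrite the binary entropy of each area as a constant minus a KL divergence to the average belief, using the fact that target motion conserves total belief mass. Interpret each $b_a(t|\pi)$ and $\bar b_t$ as Bernoulli distributions, so that
\[
D_{\mathrm{KL}}(b_a\|\bar b_t)=b_a\log\frac{b_a}{\bar b_t}+(1-b_a)\log\frac{1-b_a}{1-\bar b_t}.
\]
Working in base $2$ for consistency with $h$, this gives the pointwise identity
\[
D_{\mathrm{KL}}(b_a\|\bar b_t)=-h(b_a)-b_a\log_2\bar b_t-(1-b_a)\log_2(1-\bar b_t).
\]
Summing over $a\in\mathcal A$ gives
\[
\sum_a D_{\mathrm{KL}}(b_a(t|\pi)\|\bar b_t)=-h(b(t|\pi))-S\log_2\bar b_t-(|\mathcal A|-S)\log_2(1-\bar b_t),
\]
where $S=\sum_a b_a(t|\pi)$.

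The key step is that the last two terms do not depend on $\pi$. By the transition~\eqref{eq_transition}, each $\tilde\pi^{a'}_{t}$ is a probability distribution after boundary renormalization, so each posterior mass $\tilde b_{a'}$ is redistributed without loss. Hence $S=\sum_{a'}\tilde b_{a'}(t-1)$, as already noted in the proof of Lemma~\ref{lem_target}. This also makes $\bar b_t=S/|\mathcal A|$ the average posterior belief, which is independent of $\pi$. Consequently
\[
\sum_a D_{\mathrm{KL}}(b_a(t|\pi)\|\bar b_t)=|\mathcal A|\,h(\bar b_t)-h(b(t|\pi)).
\]
To see this, substitute $S=|\mathcal A|\bar b_t$: the constant terms become $-|\mathcal A|\bigl(\bar b_t\log_2\bar b_t+(1-\bar b_t)\log_2(1-\bar b_t)\bigr)=|\mathcal A|h(\bar b_t)$. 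Maximizing $h(b(t|\pi))$ over $\pi\in\Delta(Y)$ is therefore equivalent to minimizing the KL sum, and the two problems have the same set of optimizers. As a byproduct, nonnegativity of KL recovers the upper bound $|\mathcal A|h(\bar b_t)$ from Lemma~\ref{lem_target}.

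The main obstacle is conceptual rather than computational. We must justify that $\bar b_t$ is $\pi$-independent, which requires care at boundary cells: renormalizing $\pi$ to $\tilde\pi^{a'}$ must preserve the mass of $a'$. We must also handle the degenerate cases $\bar b_t\in\{0,1\}$, where the KL divergence may be infinite. There I would adopt the conventions $0\log 0=0$ and note that all $b_a$ then equal $\bar b_t$, so both problems are trivially solved by every $\pi$. Finally, I would state explicitly that the KL divergence here is between Bernoulli distributions with parameters $b_a$ and $\bar b_t$, matching the sum over $j\in\{0,1\}$ in the statement.
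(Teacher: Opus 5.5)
Your proposal is correct and matches the paper's proof: expand the Bernoulli KL divergence, sum over areas, and use the mass conservation $\sum_a b_a(t|\pi)=|\mathcal{A}|\bar b_t$ (independent of $\pi$) to obtain $\sum_a D_{\mathrm{KL}}(b_a(t|\pi)\|\bar b_t)=|\mathcal{A}|h(\bar b_t)-h(b(t|\pi))$. Your extra remarks on boundary renormalization, the degenerate cases $\bar b_t\in\{0,1\}$, and the choice of logarithm base fill in details the paper leaves implicit.
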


\noindent\textbf{Proof.}
Expanding the KL divergence yields
$$
\begin{aligned}
\sum_{a \in \mathcal{A}} D_{\mathrm{KL}}( b_a(t|\pi) \| \bar{b}_t )
=& \sum_{a \in \mathcal{A}} \left(
b_a(t|\pi)\log\frac{b_a(t|\pi)}{\bar{b}_t}
+(1-b_a(t|\pi))\log\frac{1-b_a(t|\pi)}{1-\bar{b}_t}
\right) \\
=& |\mathcal{A}| h(\bar{b}_t) - h(b(t|\pi)),
\end{aligned}
$$
where we used the identity $\sum_a b_a(t|\pi) = |\mathcal{A}| \bar{b}_t$ and the fact that $\bar{b}_t$ is fixed at the moment of prediction. \hfill $\square$

Lemma~\ref{lem_KL} establishes an exact equivalence between the target's one-step entropy maximization and a KL-divergence minimization when the full predicted belief vector is available. With this equivalence, we employ the scalar average belief $\bar{b}$ as the reduced state rather than the entire belief vector to reduce computational complexity. This reduction retains the precision of the immediate utility with the subsequent value being approximated. Although this reduction brings additional approximation errors, it is valuable because directly solving an exact high-dimensional POSG is not practical.

During training, the value function $\hat{U}^T$ is learned against a weak searcher model that plays the no-lookahead greedy strategy, i.e., it simply minimizes the immediate entropy. While this opponent is not fully strategic, it provides a conservative and computationally affordable baseline for value learning. The immediate utility of the target in this setting is
\begin{equation}
\hat{u}^T(\bar{b},\pi)=-\sum_{a\in\mathcal{A}} D_{\mathrm{KL}}\big( b_a(t|\pi) \big\| \bar{b} \big),
\end{equation}
and by Lemma~\ref{lem_KL}, maximizing $\hat{u}^T$ is equivalent to maximizing the immediate entropy $h(b(t|\pi))$.

To further ensure tractability, we discretize the average belief and the target's mixed strategy space into finite sets $\mathbb{B}_b$ and $\hat{\Pi}$, respectively. For any $\bar{b}\in \mathbb{B}_b$, let $\hat{\bar{b}}$ denote the predicted next average belief after one step of search and evasion under the weak searcher model. The value function is then updated as
\begin{equation}\label{eq_target_bellman}
\hat{U}^T(\bar{b}) \leftarrow
\max_{\pi \in \hat{\Pi}}
( \hat{u}^T(\bar{b},\pi) + \xi\tilde{\hat{U}}^T(\hat{\bar{b}}) ),
\end{equation}
where $\tilde{\hat{U}}^T(\cdot)$ denotes the linear interpolation of current $\hat{U}^T$ on the grid $\mathbb{B}_b$. The update is iterated over all discretized states until convergence, yielding a look-up table $\hat{U}^T$ stored on the server. During execution, the server selects the target strategy greedily according to this value function.

\begin{remark}
Because the entire belief vector is reduced to the scalar average $\bar{b}$, the exact transition still requires knowledge of the full belief distribution, so approximation bias is unavoidable. The moderate loss of predictive fidelity is an acceptable trade-off here, as the reduced target-side value function is introduced to obtain a computationally feasible adversarial predictor.
\end{remark}

\subsection{Distributed Algorithm with Central Server}

The overall algorithm is structured in two phases. In the pre-training phase, the central server discretizes the reduced state and action spaces, and trains the target value function $\hat{U}^T$ via the Bellman update~\eqref{eq_target_bellman} against a weak searcher baseline. During the distributed search phase, each searcher distributedly selects its movement using~\eqref{eq_lookahead_cost} with local strategy exchanges, while the server performs belief aggregation and prior belief prediction. The procedure is summarized as follows.

\begin{algorithm}[!ht]
\caption{Server-assisted distributed algorithm}
\label{alg}
\begin{algorithmic}
\STATE \textbf{Phase 1 (Target Value Pre-Learning)}
\STATE Initialize $\mathbb{B}_b$, $\hat{\Pi}$ and $\hat{U}^T(\bar{b})$ for all $\bar{b}\in \mathbb{B}_b$.
\REPEAT
    \FOR{each $\bar{b}\in \mathbb{B}_b$}
        \STATE Train $\hat{U}^T$ according to the update rule~\eqref{eq_target_bellman}.
    \ENDFOR
\UNTIL{convergence or max steps}
\STATE \textbf{Phase 2 (Search Execution)}
\FOR{$t=0,1,2,\dots$}
    \STATE Server broadcasts predictions $b(t)$.
    \FORALL{searcher $i\in V$}
        \STATE Receive neighbors' $\lambda_{t-1}^j$. Choose $\lambda_t^i$ via~\eqref{eq_action_update} using $\hat{U}_i$ from~\eqref{eq_lookahead_cost}.
        \STATE Execute move and sense, send results to server.
    \ENDFOR
    \STATE Server updates $\tilde{b}(t)$ via~\eqref{eq_posterior} and gets their average $\bar{b}_{t+1}$.
    \STATE Server picks $\pi_t$ greedily according to $\hat{U}^T$ over $\hat{\Pi}$ as~\eqref{eq_target_bellman}, then predicts $b(t+1)$ by~\eqref{eq_transition}.
\ENDFOR
\end{algorithmic}
\end{algorithm}


Due to the approximations introduced in the algorithm, a direct convergence guarantee is difficult to establish. Instead, the following proposition summarizes a key property concerning equilibrium solutions.

\begin{proposition}\label{prop_equilibrium}
Let $\Sigma$ be the set of all searcher strategy profiles generable by Algorithm~\ref{alg} when the greedy action selection in~\eqref{eq_action_update} is replaced by a strict better response rule (i.e., $\tilde{\lambda}_t^i$ is chosen such that $\hat{U}_i(\tilde{\lambda}_t^i,\lambda_{t-1}^{-i},b(t)) < \hat{U}_i(\lambda_{t-1}^i,\lambda_{t-1}^{-i},b(t))$ whenever such an action exists). Then a Nash equilibrium strategy $\sigma^\ast$ satisfies $\sigma^\ast \in \Sigma$.
\end{proposition}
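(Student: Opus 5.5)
The plan is to exploit the exact potential structure established for the searcher side. We then show that a Nash equilibrium profile is a fixed point of the modified update rule, and that this fixed point is reached with positive probability from any initialization. Three ingredients are needed.

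First, we recall that $\Psi(\lambda,b)=\sum_a\Phi(\lambda(a),b_a)$ is an exact potential for the local utilities $u_i$. The argument carries over to the lookahead surrogate $\hat U_i$ in~\eqref{eq_lookahead_cost}. The reason is that a unilateral change of $\lambda^i$ alters only areas in $\bar{\mathcal C}^i_t$. Both the immediate expected entropy and the inner $\max_{\hat\lambda^i}u_i$ term depend on $\lambda^{-i}$ only through these areas. So differences of $\hat U_i$ under unilateral deviations coincide with differences of the global surrogate $\hat\Psi=\mathbb{E}_o[h(\tilde b(t|o))]+\xi(\cdot)$. Hence every strict better response strictly decreases $\hat\Psi$. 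Since the action sets $\Lambda^i(v^i_{t-1})$ are finite, any better-response path is finite (the finite improvement property). Its endpoints are exactly the profiles where no searcher has a strict improvement, i.e. the pure Nash equilibria of the stage potential game.

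Second, we connect these endpoints to the equilibrium $\sigma^\ast$. By Remark~\ref{rmk_ne_imperfect}, the zero-sum POSG equilibrium can be treated sequentially. Given the broadcast prior $b(t)$, which is produced by the target predictor $\pi_t$ as a best response, the searchers' equilibrium component at time $t$ is a profile from which no searcher can unilaterally reduce the (approximated) cost. Under the one-step-lookahead approximation this is a minimizer of the stage potential, hence a stage Nash equilibrium $\lambda^\ast_t$. We then check that, at $\lambda^\ast_t$, the modified rule keeps $\lambda^\ast_t$. Inertia keeps it with probability $\beta$, and when the strict better-response branch is invoked no strictly improving action exists, so the action is unchanged. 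Thus $\lambda^\ast_t$ is absorbing for the non-exploratory dynamics.

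Third, we show $\sigma^\ast$ is actually generable, i.e. lies in $\Sigma$. From any profile $\lambda_{t-1}$ we construct a finite better-response path ending at $\lambda^\ast_t$, or more generally one that realizes the sequence $\lambda^\ast_t$ over time. To do this we use the $\epsilon$-exploration branch: with probability at least $((1-\beta)\epsilon/|\Lambda^i|)^m$, all searchers jump directly to their components of $\lambda^\ast_t$. Alternatively, inertia lets one searcher move at a time while the others stay put (probability $\beta^{m-1}(1-\beta)$), following a sequential path. Either way, each realization has positive probability, so the equilibrium strategy belongs to the support of generable profiles. Mixed-equilibrium components are handled the same way, since every action in their support is individually reachable.

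The main obstacle is the second step. It requires matching the infinite-horizon equilibrium $\sigma^\ast$ of $U$ with stage-wise fixed points of the approximate $\hat U_i$. I would handle it by reading the proposition at the level of the algorithm's surrogate game, using Lemma~\ref{lem_detect} and Remark~\ref{rmk_ne_imperfect} to justify the sequential (one-step) equilibrium notion. Reachability via exploration then does the rest.
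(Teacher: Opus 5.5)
\textbf{Verdict: genuine gap.} The paper states this proposition without proof. Its only gloss is that Algorithm~\ref{alg} ``does not exclude equilibrium strategies from the reachable strategy space.'' Your argument therefore has to stand alone, and its central link, Step~2, is assumed rather than proved. The $\sigma^\ast$ in the statement is a POSG equilibrium: it minimizes the infinite-horizon $U(\cdot,\pi^\ast|b_0)$ against the true $\pi^\ast$. The modified rule instead tests for a strict decrease of $\hat U_i$ in~\eqref{eq_lookahead_cost}. That test differs from $U$ in several ways:
\begin{itemize}
\item it uses a one-step lookahead with the inner term $\max_{\hat\lambda^i}u_i$;
\item its target predictor was trained against a weak greedy searcher, not taken from $\pi^\ast$;
\item it compares candidates against the stale profile $\lambda^{-i}_{t-1}$ with status quo $\lambda^i_{t-1}$, not at $\lambda^\ast_t$.
\end{itemize}
Nothing implies that the actions prescribed by $\sigma^\ast$ pass this test. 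Remark~\ref{rmk_ne_imperfect} only concerns the minimax structure and the equivalence of predicting and inferring the target's move; it gives no reduction of $U$ to $\hat U_i$. ``Reading the proposition at the level of the surrogate game'' therefore proves a different statement.

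Step~1 also fails. Locality of $\hat U_i$ to $\bar{\mathcal C}^i_t$ does not make the game an exact potential game, because the continuation term $\xi\,\mathbb{E}_o[\max_{\hat\lambda^i}u_i(\hat\lambda^i,\lambda^{-i}|b(t|o),\pi)]$ is player-specific. That is why your $\hat\Psi$ ends in an unspecified $\xi(\cdot)$; the paper asserts this alignment but does not prove it. Even granting a potential, the finite improvement property concerns iterated better responses within one fixed stage game. Algorithm~\ref{alg} makes a single update per physical time step while $b(t)$ and $\Lambda^i(v^i_{t-1})$ change. So ``better-response paths ending at $\lambda^\ast_t$'' and ``$\lambda^\ast_t$ is absorbing'' have no meaning across time, and the inertia route in Step~3 fails for the same reason.

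Step~3 is the only part that actually places anything in $\Sigma$, yet it never uses that $\sigma^\ast$ is an equilibrium. If correct, it would make every feasible searcher strategy generable, which renders Steps~1--2 superfluous and the proposition trivial under that reading. As written it is also false. From an arbitrary $\lambda_{t-1}$, the component $\lambda^{\ast,i}_t$ need not lie in $\Lambda^i(v^i_{t-1})$, because of the mobility bound $r_{m,s}$.

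The valid version is an induction along $\sigma^\ast$'s own trajectory from the same initial positions. Exploration then reproduces each prescribed action with probability at least $(1-\beta)\epsilon/|\Lambda^i(v^i_{t-1})|$, so every finite prefix of every realization of $\sigma^\ast$ has positive probability. If $\Sigma$ is meant to contain mixed strategies rather than realized action sequences, reaching each support action individually still does not reproduce $\sigma^\ast$'s mixing weights. The algorithm fixes those weights through $\beta$, $\epsilon$ and the better-response selection.

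You need to fix one meaning of $\Sigma$:
\begin{itemize}
\item Under the realization reading, the induction above is the whole proof, and the equilibrium property plays no role.
\item If exploration is excluded, you must add a hypothesis. Each action of $\sigma^\ast$ must either repeat $\lambda^{\ast,i}_{t-1}$, or strictly lower $\hat U_i(\cdot,\lambda^{\ast,-i}_{t-1},b(t))$ below its value at $\lambda^{\ast,i}_{t-1}$. This does not follow from $\sigma^\ast$ being a POSG equilibrium, and neither your proposal nor the paper supplies it.
\end{itemize}
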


Proposition~\ref{prop_equilibrium} indicates that Algorithm~\ref{alg} does not exclude equilibrium strategies from the reachable strategy space. According to Theorem~\ref{thm_homo}, the essential requirement for $\alpha$-detectability is not precise optimality, but a uniformly positive lower bound on the probability of covering the target, i.e., $\mathbb{P}(w_t \in \mathcal{C}(t) | \mathcal{F}_{t-1}) \geq \rho > 0$ in Assumption~\ref{assumption_2}. As long as the search strategy maintains such a uniform lower bound on the coverage probability, the $\alpha$-detectability guarantee established in Theorem~\ref{thm_homo} remains valid even when the computed strategy deviates moderately from an exact equilibrium.

\section{Numerical Simulations}\label{sec-exp}

In this section, we present numerical simulations to illustrate the behavior of the proposed distributed algorithm and to examine the $\alpha$-detectability of the derived search strategies.

The simulation is conducted under the following settings. The search area is a $10 \times 10$ grid unless otherwise stated. Players' mobility is set to 1 grid per step. The detection threshold is set to $\alpha = 0.4$. Initially, no prior information about the target's location is assumed, i.e., $b_a(0)=0.5$ for all areas $a$. For all entropy plots in this section, each trajectory is shown only up to the first step at which the detection threshold $\alpha$ is crossed, or up to 300 steps if the threshold is never reached.

\subsection{Validation of Distributed Algorithm}

We first examine whether the proposed distributed algorithm can produce effective search strategies for a given perceptual setting $p_d=0.8$ and $p_f=0.1$ with 3 searchers. Fig.~\ref{fig:entropy_single} and Fig.~\ref{fig:heatmaps} present the same single run from two complementary perspectives. Fig.~\ref{fig:entropy_single} shows the total posterior entropy $\sum_{a\in\mathcal{A}} h(\tilde{b}_a(t))$, which drops sharply from its initial high value, then rebounds and subsequently goes through several cycles of decline and increase. To provide a more direct assessment of whether the searchers have truly located the target, we additionally mark the time steps at which the area of the highest posterior belief, i.e., $\arg\max_a \tilde{b}_a(t)$, coincides with the true target position. Fig.~\ref{fig:heatmaps} depicts the evolution of the belief map and the positions of the agents at four selected steps of the same run. 

\begin{figure}[!t]
    \centering
    \includegraphics[width=0.6\linewidth]{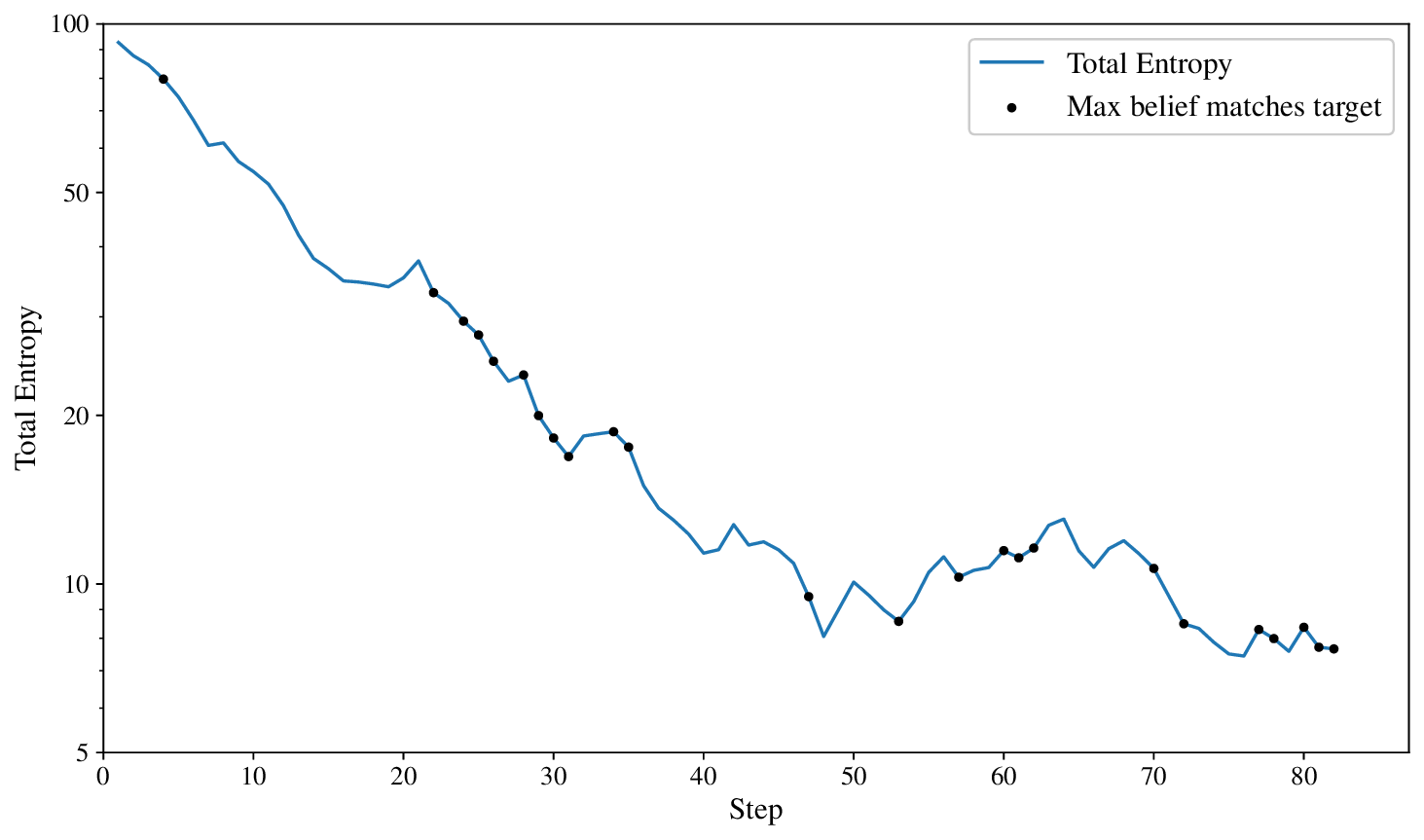}
    \caption{Total posterior entropy for $m=3$ searchers under moderate accuracy ($p_d=0.8$, $p_f=0.1$). At step 82, it reaches $\alpha$-detectability.}
    \label{fig:entropy_single}
\end{figure}

\begin{figure}[!t]
    \centering
    \subfloat[Step 10]{\includegraphics[width=0.225\linewidth]{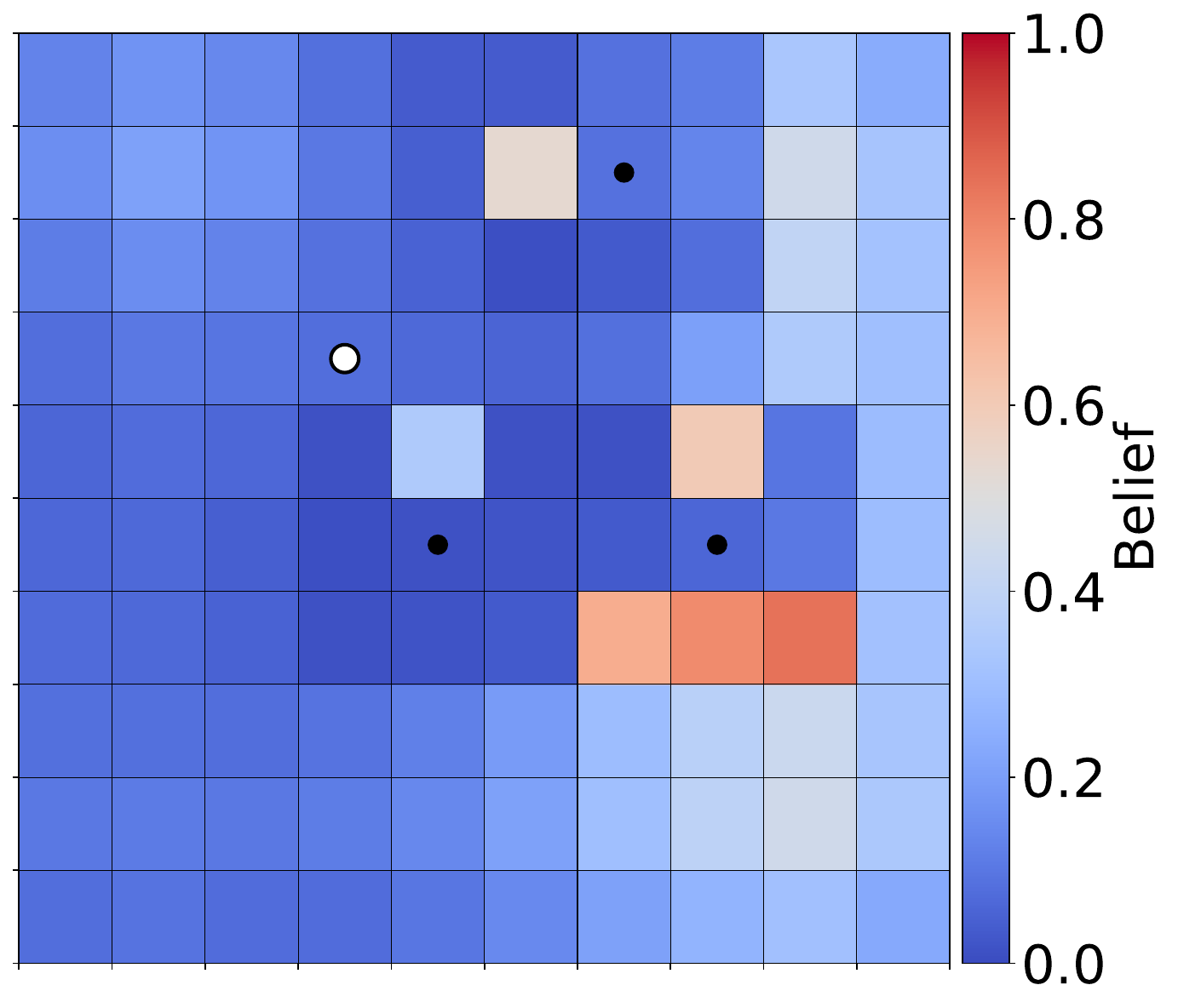}\label{fig:heatmap_1}}
    \subfloat[Step 30]{\includegraphics[width=0.225\linewidth]{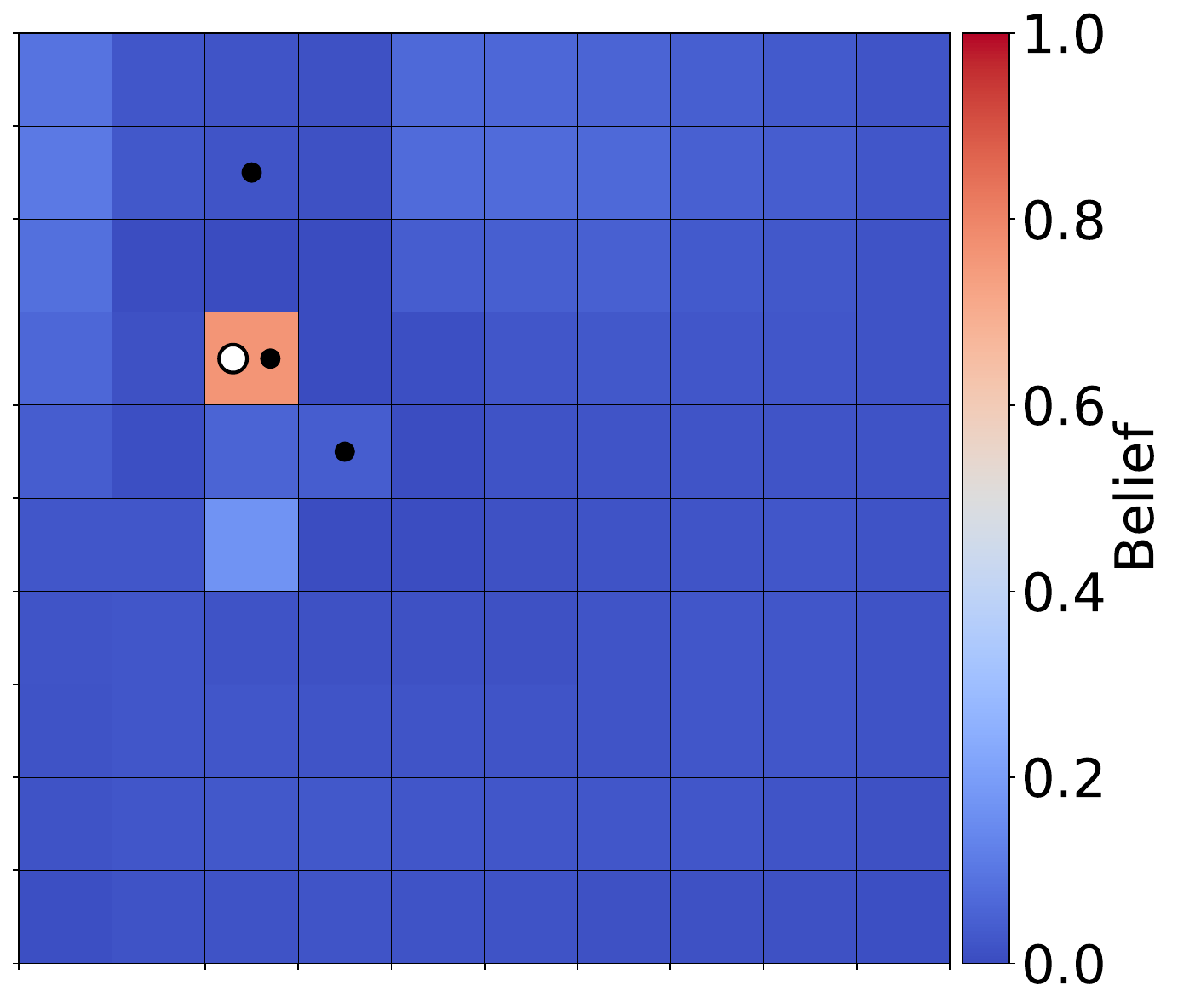}\label{fig:heatmap_2}}
    \subfloat[Step 40]{\includegraphics[width=0.225\linewidth]{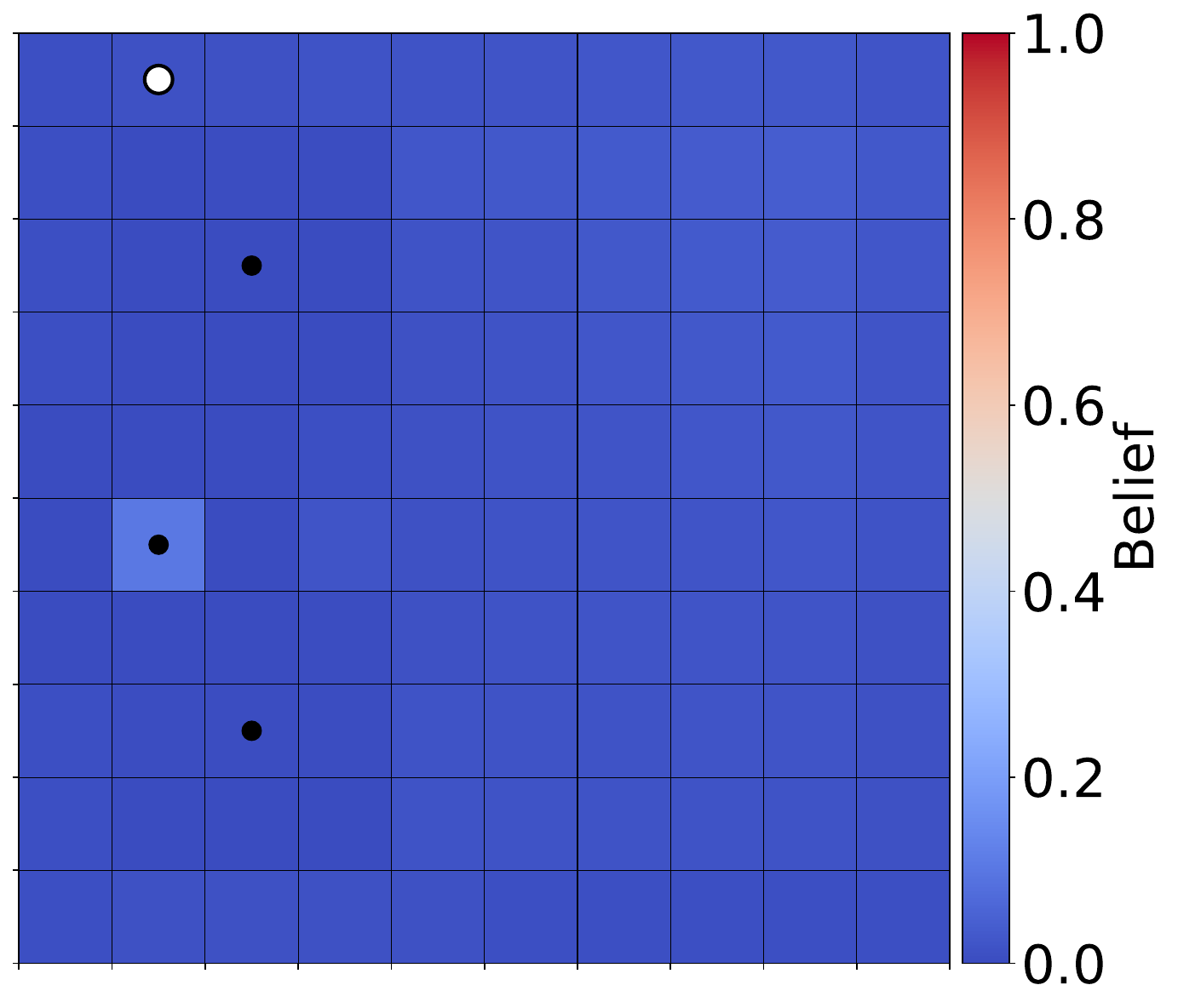}\label{fig:heatmap_3}}
    \subfloat[Step 82]{\includegraphics[width=0.225\linewidth]{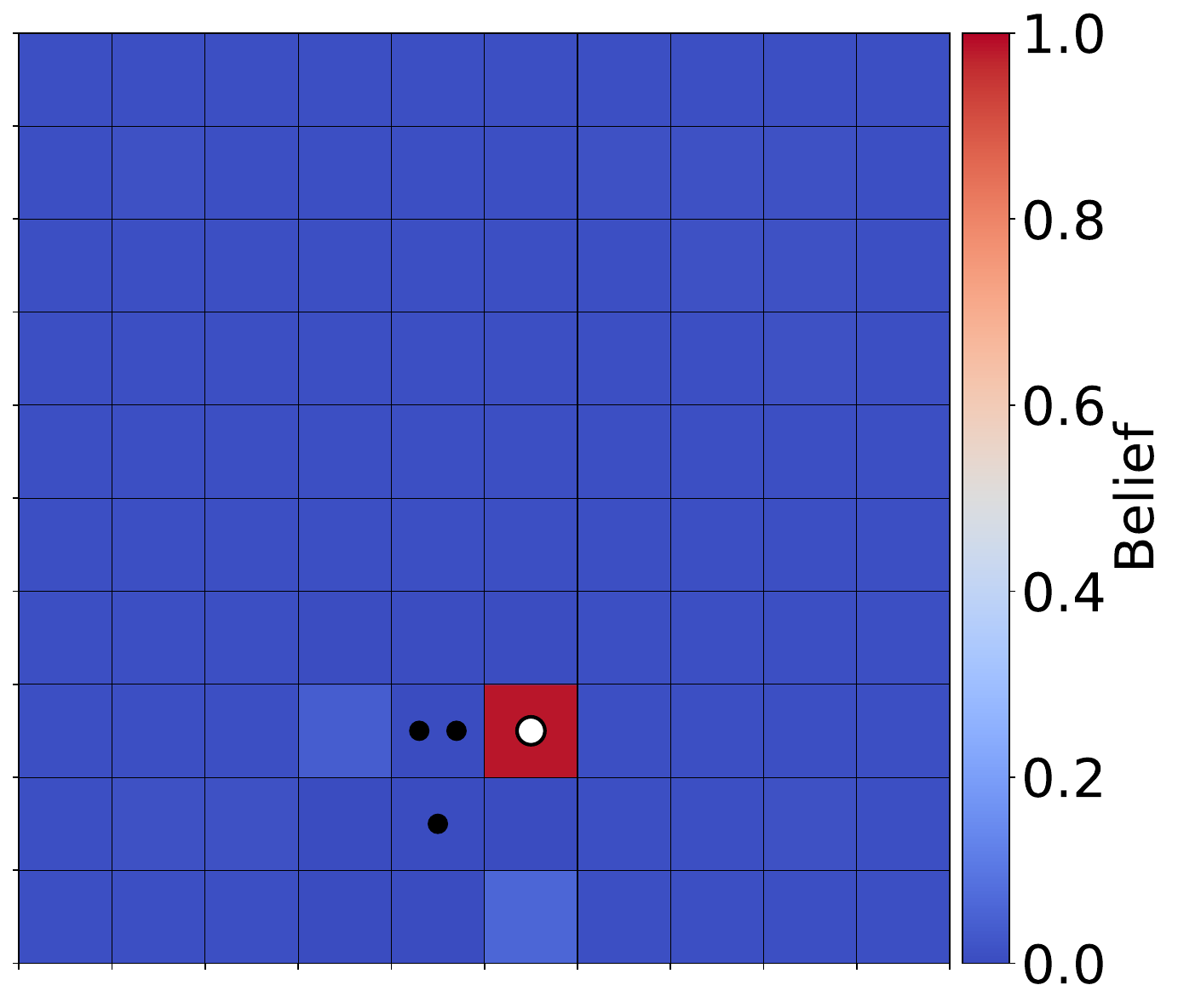}\label{fig:heatmap_4}}
    \caption{Belief maps and agent positions for the run in Fig.~\ref{fig:entropy_single}.}
    \label{fig:heatmaps}
\end{figure}

Notably, black dots first persistently appear around step 30, which is also consistent with its heatmap, indicating that the searchers have already located the target in terms of maximum posterior belief. However, the total entropy at this stage remains well above the threshold $\alpha=0.4$, revealing that a correct maximum-likelihood estimate does not yet constitute reliable detection. This gap precisely illustrates the necessity of the joint criterion in Definition~\ref{def-searchthreshold}: the entropy threshold serves as a reliability measure that complements the peak location, thereby highlighting the significance of $\alpha$-detectability. Subsequently, the entropy rebounds and the process enters a phase of belief vanishing, where no area maintains a confidence above $0.5$ and the posterior mass diffuses across the grid. This phenomenon confirms that guaranteed detection critically relies on the recurrent coverage condition in Assumption~\ref{assumption_2}; without a persistent positive probability $\rho$ of covering the target, the evader can escape the sensing range for extended periods, causing information loss to dominate. Around step 80, the searchers re-establish coverage through continued exploration, and the black dots reappear. Leveraging renewed observations, the entropy undergoes a final decline and drops below $\alpha$ at step 82 while a unique dominant peak exceeding $0.5$ persists at the true target location, thereby achieving $\alpha$-detectability.

\begin{figure}[!t]
    \centering
    \includegraphics[width=0.6\linewidth]{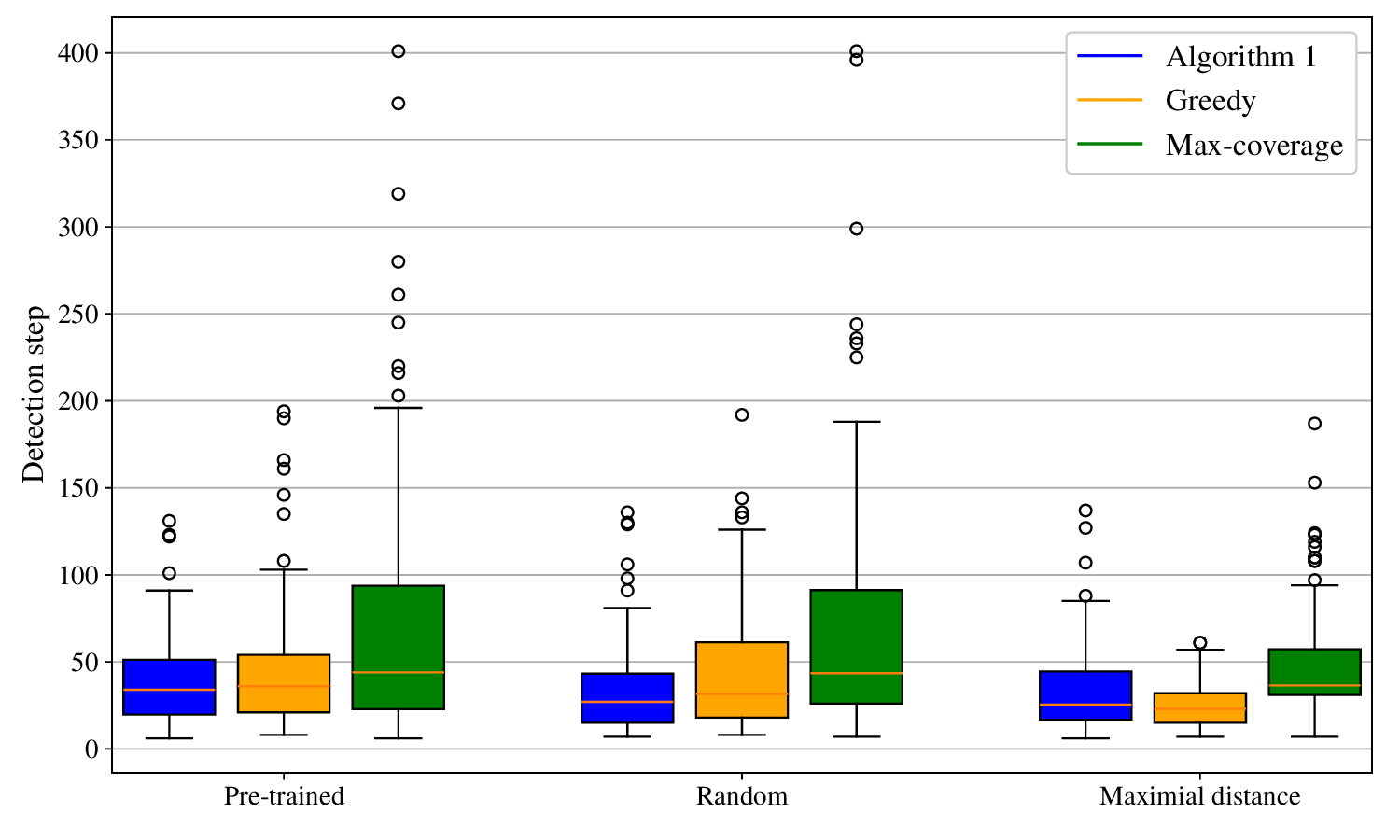}
    \caption{Box plots of detection steps for all combinations of searcher and target strategies (high false alarm case: $5\times5$ grid, $p_d=0.8$, $p_f=0.3$, $m=3$, 100 runs each).}
    \label{fig:baseline_boxplot}
\end{figure}

To further assess the effectiveness of the proposed algorithm, especially facing false alarms, we compare it with another two heuristic searcher strategies, greedy (maximizing the highest expected posterior probability) and max-coverage (maximizing the cumulative count of coverage), against three target strategies: (1) Algorithm~\ref{alg}, (2) random, and (3) max-distance (from searchers).

We conduct 100 independent runs for each of the nine combinations under a high false alarm rate condition: $5\times5$ grid, $p_d=0.8$, $p_f=0.3$, $m=3$, with a maximal step of 400. This design is intended to emphasize the impact of false alarms on different search strategies, so the search task itself is kept relatively easy and the other parameters remain fairly benign. Fig.~\ref{fig:baseline_boxplot} presents the detection-step statistics as box plots. The results demonstrate that the proposed algorithm achieves a relatively low median detection time and a relatively low variance, indicating robust and reliable performance against noisy perceptions. The learned NE target proves to be the most challenging adversary for all search strategies, validating that the value iteration captures a strong evasion policy.

\subsection{Verification of Detectability}

We now turn to the verification of $\alpha$-detectability from an empirical perspective. Fig.~\ref{fig:entropy_perception} shows representative total entropy trajectories under three sensing configurations: (i) high accuracy ($p_d=0.95$, $p_f=0.02$), (ii) moderate accuracy ($p_d=0.8$, $p_f=0.1$), and (iii) low accuracy ($p_d=0.6$, $p_f=0.2$), with $m=3$ searchers. While Fig.~\ref{fig:entropy_searchers} shows trajectories under the moderate regime for varying team sizes $m=2,3,4$. Fig.~\ref{fig:detect_steps_accuracy} and Fig.~\ref{fig:detect_steps_searchers} present the corresponding detection step statistics as box plots, respectively.

These figures collectively imply that $\alpha$-detectability is achieved only when factors are sufficiently favorable. In Fig.~\ref{fig:entropy_perception} and Fig.~\ref{fig:detect_steps_accuracy}, only the hard case fails to reach $\alpha$-detectability within 300 steps; higher accuracy yields shorter detection times and smaller variance. Likewise, in Fig.~\ref{fig:entropy_searchers} and Fig.~\ref{fig:detect_steps_searchers}, only $m=2$ frequently fails to achieve $\alpha$-detectability, whereas larger teams succeed; increasing the team size accelerates detection and reduces variability.

This finding is consistent with Theorem~\ref{thm_homo}. When the SNR is too low (hard case) or the coverage is too small ($m=2$), the inequality in Theorem~\ref{thm_homo} and the uniform lower bound on covering the target may no longer hold. Consequently, the noise from false alarms and missed detections or uncovered stages dominates the dynamics, so that either the total entropy ceases to decrease or the belief remains trapped in vanishing.

\begin{figure}[!t]
    \centering
    \includegraphics[width=0.6\linewidth]{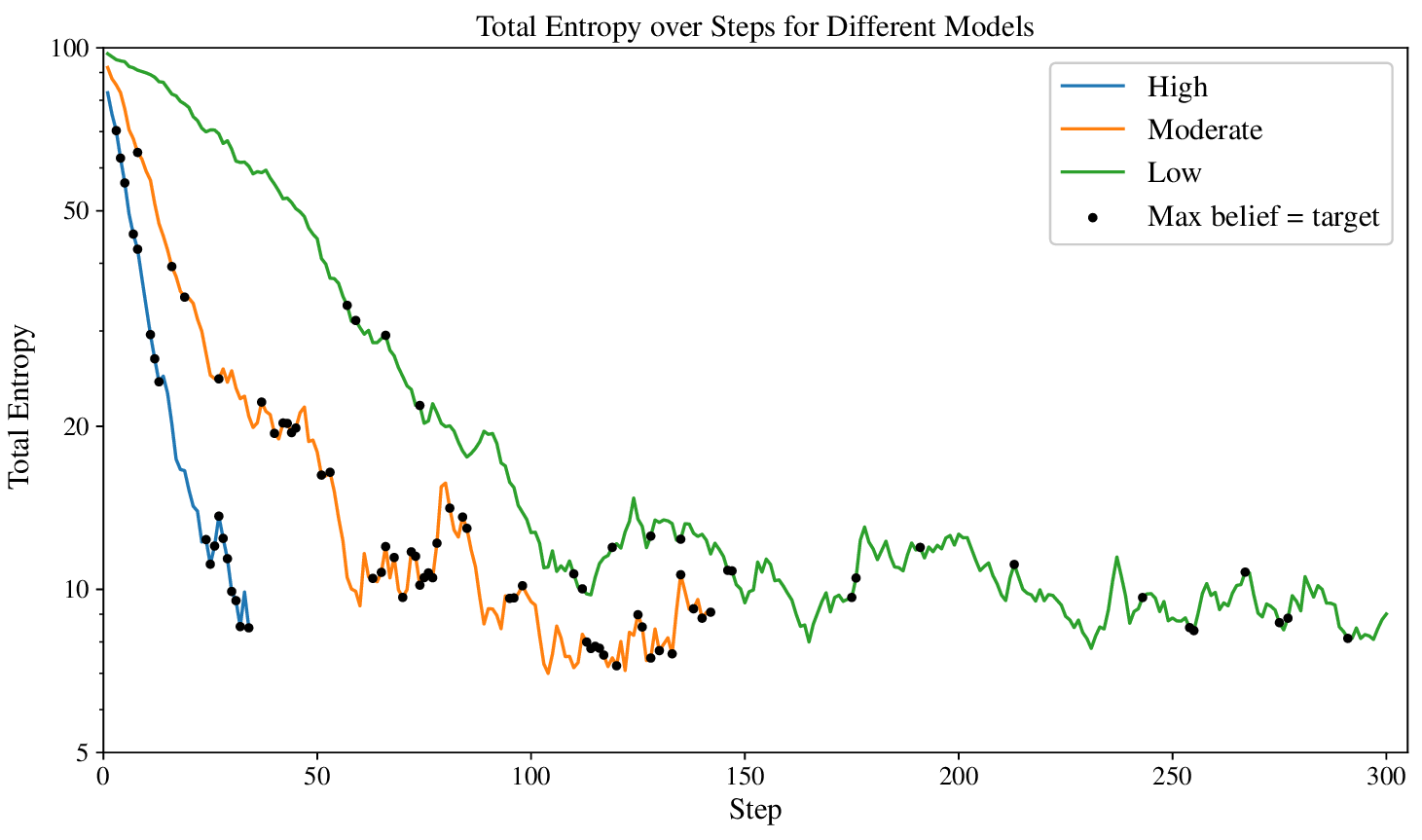}
    \caption{Total entropy over steps under each sensing accuracy. The hard configuration fails to achieve detectability within the horizon.}
    \label{fig:entropy_perception}
\end{figure}

\begin{figure}[!t]
    \centering
    \includegraphics[width=0.6\linewidth]{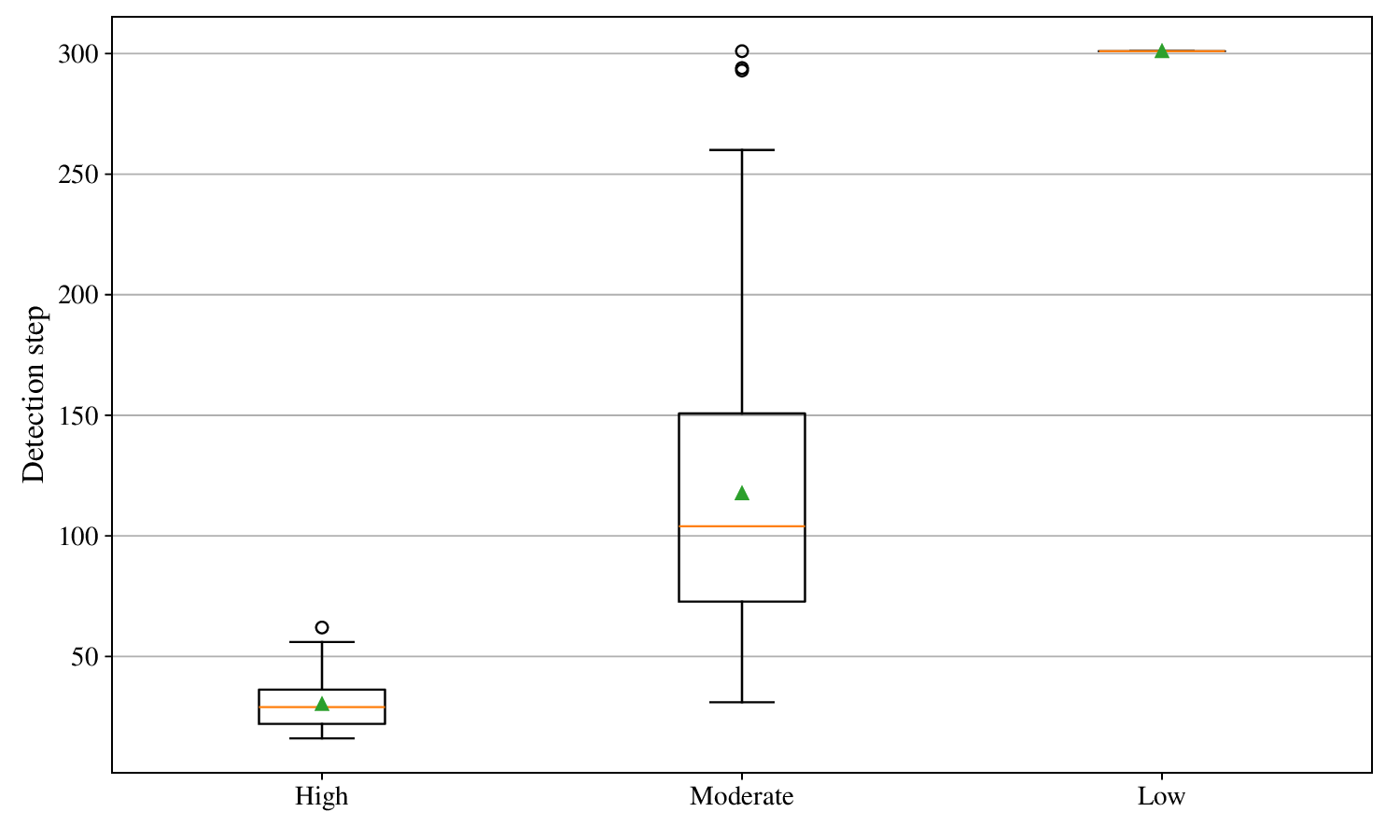}
    \caption{Box plots of detection steps for different sensing accuracies (High: $p_d=0.95$, $p_f=0.02$, Moderate: $p_d=0.8$, $p_f=0.1$, Low: $p_d=0.6$, $p_f=0.2$, 100 runs each). The hard one fails to detect within 300 steps.}
    \label{fig:detect_steps_accuracy}
\end{figure}

\begin{figure}[!t]
    \centering
    \includegraphics[width=0.6\linewidth]{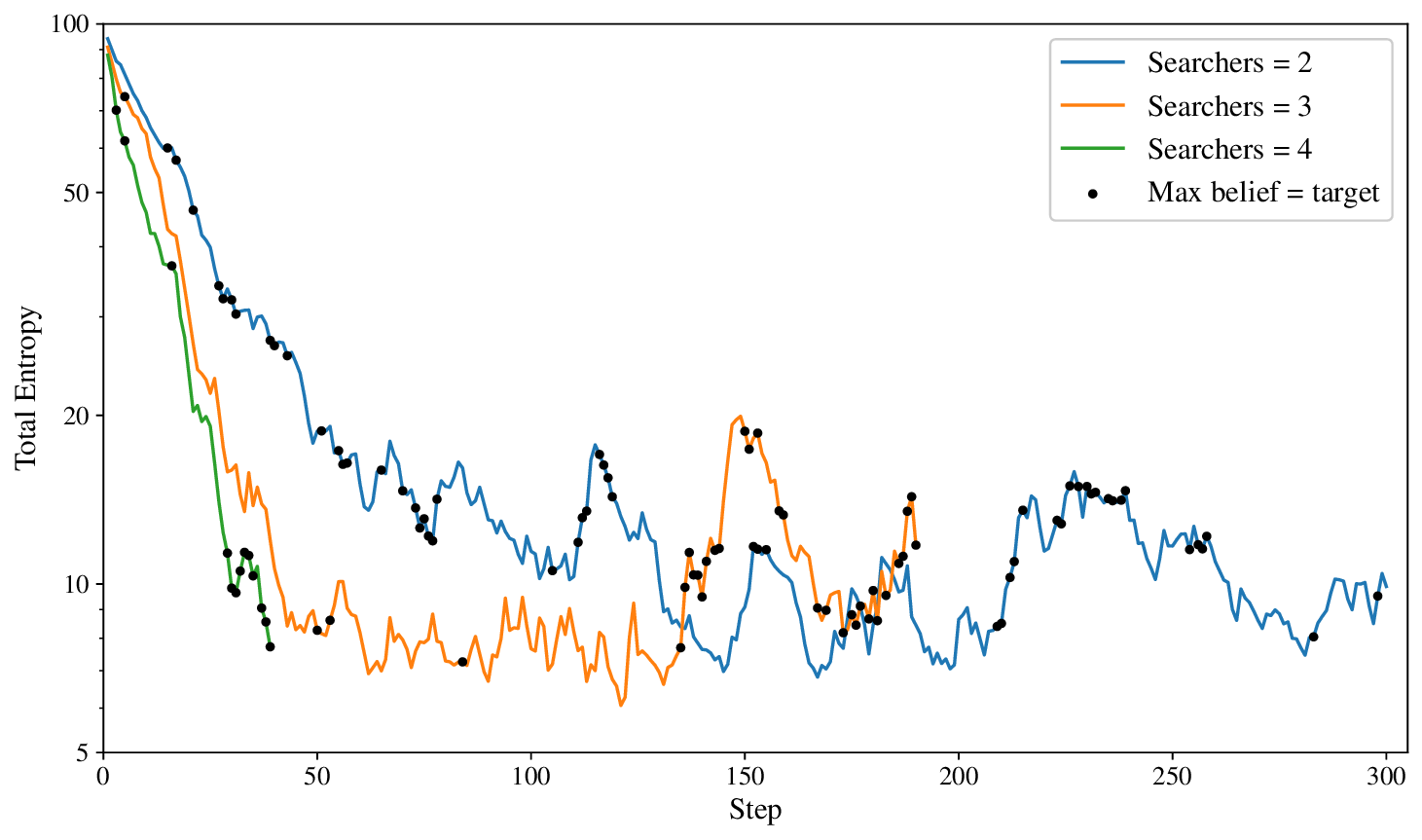}
    \caption{Total entropy over steps under different searcher numbers. The $m=2$ configuration fails to achieve detectability within the horizon.}
    \label{fig:entropy_searchers}
\end{figure}

\begin{figure}[!t]
    \centering
    \includegraphics[width=0.6\linewidth]{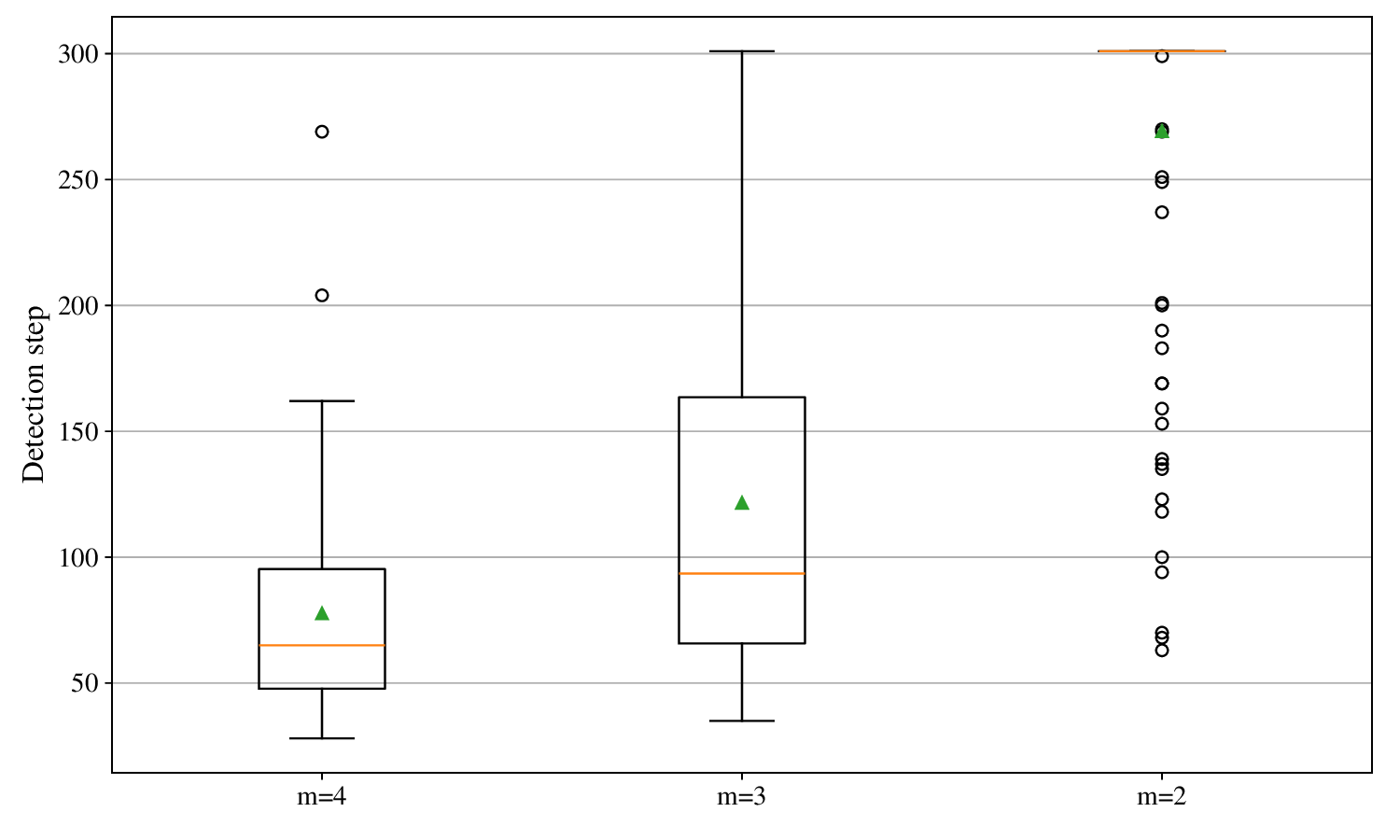}
    \caption{Box plots of detection steps for different number of searchers ($m=2,3,4$).}
    \label{fig:detect_steps_searchers}
\end{figure}

Overall, the results in this subsection provide empirical support for the detectability criteria derived in Section~\ref{sec-detectable}. Although these numerical results cannot serve as a formal proof of $\alpha$-detectability for every specific scenario, they demonstrate that detectability is contingent on sufficient sensing quality and number of searchers: the hard accuracy setting and the number of searchers $m=2$ fail to achieve detection within the 300-step horizon because sustained belief vanishes, whereas more favorable configurations reliably satisfy the criterion.

\section{Conclusions}\label{sec-conclusion}

In this work, we addressed mobile target search under uncertain perceptions by adopting a distributed partially observable stochastic game (POSG) approach. We proposed the concept of $\alpha$-detectability which determines whether a search strategy can guarantee eventual detection despite false alarms and missed detections, and established sufficient conditions from both qualitative and quantitative perspectives using conditional recurrence analysis of the belief process. To compute an $\alpha$-detectable search strategy under the high-dimensional belief and action spaces, we designed a distributed algorithm that leverages the aggregative potential structure for local searcher-side updates and a KL-divergence-based reduction for target-side prediction. Numerical simulations validated the effectiveness of the algorithm and provided empirical support for our detectability analysis.

\bibliographystyle{ieeetr}
\bibliography{ref.bib}

\end{document}